\documentclass{article}

\usepackage[preprint]{neurips_2026}

\usepackage[utf8]{inputenc} 
\usepackage[T1]{fontenc}    
\usepackage{url}            
\usepackage{booktabs}       
\usepackage{amsfonts}       
\usepackage{nicefrac}       
\usepackage{microtype}      
\usepackage{xcolor}         
\usepackage{comment}
\usepackage{amsthm}
\usepackage{xspace}
\title{Formatting Instructions For NeurIPS 2026}

\newcommand{\ourtitle}{Beyond Parameter Aggregation: Semantic Consensus for Federated Fine-Tuning of LLMs}

\input{preamble}

\title{\ourtitle}

\usepackage[capitalize,noabbrev]{cleveref}
\usepackage[textsize=tiny]{todonotes}
\usepackage{soul}
\usepackage{multirow}
\usepackage[dvipsnames]{xcolor}

\author{%
  Amr Abourayya\\
  Lamarr Institute for ML and AI, Technical University Dortmund\\
Institute for AI in medicine (IKIM),University Hospital Essen\\
  \texttt{amr.abourayya@tu-dortmund.de} \\
  \And
  Jens Kleesiek\\
  Institute for AI in medicine (IKIM)\\
  University Hospital Essen\\
  \And
  Michael Kamp\\
   Lamarr Institute for ML and AI, Technical University Dortmund\\
  Institute for AI in medicine (IKIM), University Hospital Essen\\
  \texttt{}
}

\begin{document}



\newcommand{\TODO}[1]{\textcolor{red}{TODO: #1}}

\newcommand{\defemph}[1]{\emph{#1}}

\newcommand*\Let[2]{\State #1 $\gets$ #2}

\newcommand{\fedcofit}{FedCoFiT\xspace}
\newcommand{\E}{\mathbb{E}}


\theoremstyle{plain}
\newtheorem{theorem}{Theorem}
\newtheorem{lemma}{Lemma}

\theoremstyle{definition}
\newtheorem{assumption}{Assumption}
\newtheorem{definition}{Definition}
\newcommand{\R}{\mathbb{R}}
\newcommand{\KL}{\mathrm{KL}}
\newcommand{\TV}{\mathrm{TV}}

\newcommand{\algo}{\mathcal{A}}
\newcommand{\risk}{\varepsilon}
\newcommand{\model}{h}
\newcommand{\modelspace}{\mathcal{H}}
\newcommand{\aggmodel}{\overline{\model}}
\newcommand{\loss}{\ell}
\newcommand{\samplesize}{n}
\newcommand{\radonpoint}{\mathfrak{r}}


\newcommand{\Exp}[2]{\mathop{{}\mathbb{E}_{#1}} \Big[ #2 \Big] }
\newcommand{\Prob}[2]{\mathop{{}\mathbb{P}_{#1}} \left ( #2 \right ) }
\newcommand{\prob}[1]{\mathop{{}\mathbb{P}} \left(  #1 \right ) }

\newcommand{\Dcal}{\mathcal{D}}
\newcommand{\RR}{\mathbb{R}}
\newcommand{\NN}{\mathbb{N}}

\newcommand{\Xcal}{\mathcal{X}}
\newcommand{\Ycal}{\mathcal{Y}}
\newcommand{\bigo}{\mathcal{O}}

\newcommand{\errorband}[5][]{ 
    \addplot+[draw=none, stack plots=y, forget plot] table [
        x expr=\thisrowno{#3},
        y index={#4}
    ] {#2};

    \addplot+[forget plot, draw=none, fill=gray!40, stack plots=y, #1] table [
        x expr=\thisrowno{#3},
        y expr=\thisrowno{#5} - \thisrowno{#4}
    ] {#2} \closedcycle;

    \addplot+[forget plot, stack plots=y,draw=none] table [x index={#4}, y expr=-(\thisrowno{#5})] {#2};
}

\maketitle
\vspace{-0.5cm}
\begin{abstract}
Federated fine-tuning of large language models is commonly formulated as a parameter aggregation problem. However, even parameter-efficient methods require transmitting large collections of trainable weights, assume aligned architectures, and rely on white-box access to model parameters. As model sizes continue to grow and deployments become increasingly heterogeneous, these assumptions become progressively misaligned with practical constraints.
We consider an alternative formulation in which collaboration is mediated through model behavior rather than parameters. Clients fine-tune local models on private data and exchange generated outputs on a shared, public prompt set. The server maps these outputs into a semantic representation space, forms a per-prompt semantic consensus, and returns pseudo-labels for further local fine-tuning.
This formulation fundamentally changes the communication scaling of federated LLM fine-tuning. The amount of information exchanged depends only on the public prompt budget and the size of the communicated behaviors, independent of model size. As a consequence, the protocol naturally accommodates heterogeneous architectures and applies directly to open-ended text generation.
We present a theoretical analysis and empirical results demonstrating that this approach can match strong federated fine-tuning baselines while substantially reducing communication by orders of magnitude (e.g., analytically by a factor of $1006$ for Llama3.1-405B),  as well as reductions in runtime and energy consumption. These results suggest that, for generative foundation models, behavior-level consensus provides a more appropriate abstraction for federated adaptation than parameter aggregation.
\end{abstract}

\section{Introduction}
\label{sec:introduction}

Large language models are increasingly being adapted to specific sensitive domains, including healthcare, finance, and enterprise workflows. In many practical deployments, the data required for effective fine-tuning is distributed across organizations and devices and cannot be centralized due to privacy, legal, or competitive constraints. Federated learning provides a natural framework for collaborative adaptation in such settings, enabling participants to improve local models without exposing sensitive raw data. Yet for modern LLMs, the dominant practical constraint is often not local compute but \textbf{what can be communicated.}

Most federated learning protocols are parameter-centric, which means that clients transmit and aggregate weights, gradients, or adapter updates. For LLMs, this formulation faces three fundamental difficulties. First, communication cost becomes a dominant bottleneck. Even parameter-efficient fine-tuning approaches require repeatedly transmitting large collections of trainable weights, which are increasingly constrained by realistic bandwidth limits. Second, clients may differ in LoRA rank, adapted layers, tokenizers, or even base architectures, making parameter aggregation difficult or ill-defined. Third, white-box access to model parameters is often unavailable in practice. 

These difficulties reflect a more fundamental misalignment between parameter aggregation and the deployment and use of modern LLMs. In practice, what downstream systems consume from an LLM is not its internal parameters but its behaviour. For example, responses to prompts, preference judgments, tool calls, structured outputs, safety decisions, or intermediate reasoning artifacts emitted by the interface. This motivates a more general question: \textbf{Can federated collaboration be mediated through behaviour rather than parameters?}

At a high level, behaviour sharing treats each client model as a black-box client that can be queried on shared public stimuli and that can return compact, shareable summaries of its behaviour. The shared object need not be restricted to predictions in the classical supervised sense, depending on the application. It could be ranked preferences (for alignment), structured API calls (for tool-augmented workflows), extracted entities or plans (for enterprise pipelines), or free-form generation (for instruction following). If collaboration is expressed in a common \emph{semantic} space over such behaviours, then communication can be decoupled from model size and remain meaningful even when clients are heterogeneous.

We call this general approach \textbf{Semantic Consensus}: a framework that builds on top of the federated co-training approach \citep{abourayya2025little} in which clients exchange model behaviour on a shared public prompt set and a server aggregates these behaviours to construct consensus targets that guide further local fine-tuning. For LLMs, behaviour is naturally captured by responses generated to a set of shared public prompts. By embedding these responses into a semantic space, consensus can be formed via geometric operations (e.g., clustering) that identify semantically consistent groups and select representative behaviours as supervision. We provide a concrete instantiation of this principle in Sec.~\ref{sec:method}.
This design directly addresses the main friction points of the federated LLM adaptation. Communication becomes independent of model size, since clients transmit only generated text responses and receive consensus targets. Because aggregation occurs in the behaviour space, the framework naturally accommodates heterogeneity in architectures, tokenizers, and LoRA configurations. Finally, Semantic Consensus requires only forward access to model behaviour, making it applicable to open-ended generation without task-specific heads.

In summary, our contributions are as follows.
\begin{itemize}
    \item We introduce \textbf{Semantic Consensus}, a general federated co-training principle that replaces parameter aggregation with consensus over client behaviour on shared public inputs, yielding communication independent of model size.
    \item We propose \textbf{FedCoFiT}, an instantiation of Semantic Consensus for open-ended text generation that builds consensus using semantic embeddings, similarity metrics, and clustering to obtain high-quality pseudo-labels for further local fine-tuning.
    \item We provide a theoretical analysis explaining why behaviour-sharing yields substantially improved communication scaling relative to adapter aggregation, and we show that \fedcofit is compatible with record-level differential privacy.
    \item We empirically evaluate \fedcofit across multiple benchmarks, showing that it can match or approach strong federated fine-tuning baselines while significantly reducing communication and improving practical efficiency (runtime and energy consumption).
\end{itemize}

\section{Related Work}
\label{sec:related_work}
Traditional federated learning approaches rely on iterative model averaging (e.g., FedAvg \cite{mcmahan2017communication}), requiring clients to upload full model updates constantly. Although effective for smaller architectures, it becomes impractical for large language models due to the high cost of communication. A promising solution to this bottleneck is the adoption of parameter-efficient fine-tuning (PEFT) within the Federated Learning framework.

Parameter-efficient federated fine-tuning of LLMs such as LoRA \cite{hu2022lora} is widely adopted to reduce the number of trainable parameters, and several works adapt LoRA to federated fine-tuning by aggregating LoRA updates at the server. A key challenge is that naively averaging LoRA adapters can be mathematically inconsistent, introducing aggregation noise and degrading performance when client configurations differ. For example, FedIT \cite{zhang2024towards} integrates LoRA directly with FedAvg by having the server independently average the local LoRA adapters \textbf{A} and \textbf{B} received from clients. While this reduces communication costs, it cannot aggregate local LoRAs with heterogeneous ranks and introduces aggregation noise as the matrix multiplication required to reconstruct the update is non-linear. Specifically, the product of the averaged matrices is not equal to the sum of the individual products, leading to mathematical errors in the global model.

FLoRA addresses this issue via a stacking-based aggregation method that supports heterogeneous LoRA adapters and improves the stability of federated LoRA fine-tuning \cite{wang2024flora}. FlexLoRA synthesizes and redistributes LoRA weights using SVD to exploit heterogeneous client budgets \cite{bai2024federated}, while HetLoRA (and related heterogeneous-LoRA methods) allows different ranks across devices and aggregates while accounting for differences in data and sparsity patterns \cite{cho2024heterogeneous}. More recent studies improved initialization, aggregation, and robustness for federated LoRA tuning \cite{bian2025lora,koo2025towards}. These approaches are strong baselines when clients share a compatible parameter space and have white-box access to adapters. However, they fundamentally rely on transmitting trainable parameters (or structured adapter weights), which remains costly at scale and becomes inapplicable when clients use different architectures, tokenizers, or only have black-box access. More recent work continues to refine parameter-space aggregation for federated LoRA.  Fed-PLoRA~\cite{zhang2026heterogeneous} introduces parallel one-rank LoRA modules and a Select-N-Fold strategy to reduce initialization and aggregation noise under heterogeneous client ranks, while SFed LoRA~\cite{huang2026stabilized} studies the interaction between LoRA rank, client count, and aggregation stability through a rank-dependent scaling factor. These methods further improve heterogeneous LoRA aggregation, but they remain parameter-centric as clients must still communicate adapter-side information and the server must reason about compatible LoRA structures.

An alternative to parameter aggregation is to avoid sharing weights altogether, e.g., via distillation such as \citep{bistritz2020distributed} or black-box control. Distillation-based FL can be viewed as a special case of behavior sharing under a fixed label space, where the shared behavior is a soft predictive distribution over classes. FedID~\cite{ma2023fedid}
is a direct application of this distillation paradigm, but it applies only to classification with fixed label spaces and does not extend to open-ended generation, where token-level supervision and an unconstrained output space make aggregation more complex and bandwidth-intensive. Additionally, it assumes server-held labels and iterative feedback. When parameters are inaccessible, black-box FL instead tunes prompts via gradient-free updates (e.g., FedBPT~\cite{sun2023fedbpt}), but prompt tuning is often less expressive than weight adaptation and can be query-intensive.

\section{Method}
\label{sec:method}

\begin{figure*}[ht]
    \centering
    \vspace{-1.3cm}
    \includegraphics[width=\textwidth]{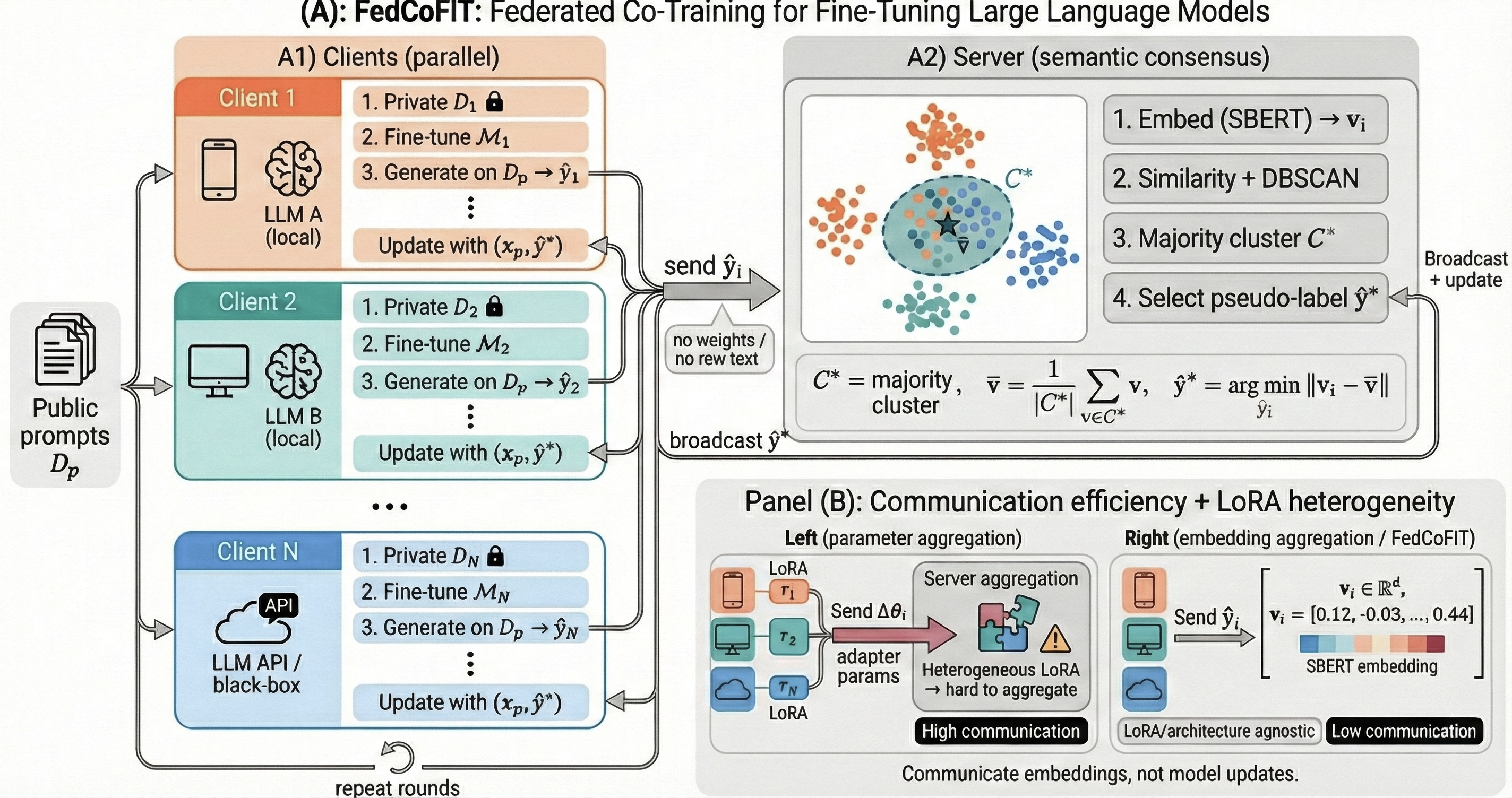}
    \caption{
    Overview of \fedcofit.
    Clients fine-tune locally, generate responses to shared public prompts, and send only these responses to the server.
    The server embeds and clusters the responses to select consensus pseudo-labels, which are broadcast back for further local training.
    Unlike parameter aggregation, \fedcofit aggregates behavior and is compatible with heterogeneous clients.
    }
    \label{fig:fedcofit}
\end{figure*}

We consider $K$ clients indexed by $i\in\{1,\dots,K\}$. 
Client $i$ has a private dataset $D_i=\{z_{i,k}\}_{k=1}^{n_i}$ drawn from a client-specific distribution $\mathcal{P}_i$ and a local language model $\mathcal{M}_i$. 
Clients may differ in architecture, tokenizer, adaptation module, LoRA rank, or access level. 
They collaborate without sharing raw data, gradients, optimizer states, model parameters, or adapter weights. 
All clients also have access to a public prompt set $D_{\mathrm{pub}}=\{x_j\}_{j=1}^{M}$ drawn from a public prompt distribution $\mathcal{P}_{\mathrm{pub}}$, containing inputs only. 
Let $\mathcal{B}_i(x)\in\mathcal{Y}$ denote the observable behavior of client $i$ on input $x$, such as a generated response, preference ranking, tool call, or structured output. 
Our goal is to use $D_{\mathrm{pub}}$ to transfer behavior across clients while keeping communication independent of model size, avoiding parameter- or adapter-alignment assumptions, and preserving the federated constraint that private data remain local.


\textbf{Semantic Consensus} is a behavior-level aggregation principle for federated fine-tuning. 
Instead of averaging parameters, clients expose their behavior on public prompts, and the server aggregates these behaviors in a shared semantic space. 
Let $g:\mathcal{Y}\rightarrow\mathbb{R}^{d_e}$ be a semantic encoder and let $\mathrm{dist}(\cdot,\cdot)$ be a distance function in the embedding space. 
For each public prompt $x_j$, client $i$ produces a behavior $b_{i,j}=\mathcal{B}_i(x_j)$ and sends it to the server. 
The server computes $e_{i,j}=g(b_{i,j})$ and aggregates $\{e_{i,j}\}_{i=1}^{K}$ to form a consensus target $t_j$. 
This yields a pseudo-labeled public set $\tilde{D}_{\mathrm{pub}}=\{(x_j,t_j)\}_{j=1}^{M}$, which is returned to clients for local fine-tuning. 
Because aggregation occurs over behaviors, Semantic Consensus does not require clients to share architectures, tokenizers, adapter shapes, or white-box model access.

\subsection{\fedcofit: Semantic Consensus for Generative LLM Fine-Tuning}
\label{sec:fedcofit}

\SetKwFor{local}{Client-side}{do}{}
\SetKwFor{coord}{Server-side}{do}{}

\begin{wrapfigure}{r}{0.52\textwidth}
\vspace{-1.3em}
\hspace{1.3em}
\begin{minipage}{0.52\textwidth}
\begin{algorithm}[H]
    \small
    \caption{\fedcofit via Semantic Consensus}
    \label{alg:fedcofit}
    \KwIn{
    clients $\{1,\dots,K\}$, private datasets $\{D_i\}$; 
    public prompts $D_{\mathrm{pub}}=\{x_j\}_{j=1}^{M}$; 
    encoder $g$; distance $\mathrm{dist}$; clustering rule $\mathrm{Cluster}$
    }
    \KwOut{updated models $\mathcal{M}_1,\dots,\mathcal{M}_K$}

    initialize $\mathcal{M}_1,\dots,\mathcal{M}_K$\;

    \Repeat{convergence}{
        \ForEach{client $i\in\{1,\dots,K\}$ in parallel}{
            update $\mathcal{M}_i$ on $D_i$\;
            generate $\hat{y}_{i,j}\leftarrow \mathcal{M}_i(x_j)$ for all $x_j\in D_{\mathrm{pub}}$\;
            send $\{\hat{y}_{i,j}\}_{j=1}^{M}$ to server\;
        }

        \ForEach{$x_j\in D_{\mathrm{pub}}$}{
            compute $e_{i,j}=g(\hat{y}_{i,j})/\|g(\hat{y}_{i,j})\|_2$, $i\in\{1,\dots,K\}$\;
            cluster $\{e_{i,j}\}_{i=1}^{K}$ using $\mathrm{Cluster}$ and $\mathrm{dist}$\;
            select consensus cluster $C_j^\star$\;
            compute centroid $\bar{e}_j$ of $C_j^\star$\;
            choose $i^\star=\arg\min_{i\in C_j^\star}\mathrm{dist}(e_{i,j},\bar{e}_j)$\;
            set $y_j^\star=\hat{y}_{i^\star,j}$\;
        }

        broadcast $\tilde{D}_{\mathrm{pub}}=\{(x_j,y_j^\star)\}_{j=1}^{M}$\;

        \ForEach{client $i\in\{1,\dots,K\}$ in parallel}{
            update $\mathcal{M}_i$ on $D_i\cup\tilde{D}_{\mathrm{pub}}$\;
        }
    }
\end{algorithm}
\end{minipage}
\vspace{-1.0em}
\end{wrapfigure}

We instantiate Semantic Consensus for open-ended generation as \textbf{\fedcofit}. 
For each public prompt $x_j$, client $i$ generates a response $\hat{y}_{i,j}\leftarrow\mathcal{M}_i(x_j)$. 
The server embeds each response with a sentence encoder, normalizes the embedding, clusters the responses by semantic similarity, and selects a representative response as the pseudo-label. 
For each response $\hat{y}_{i,j}$, the server computes $v_{i,j}=g(\hat{y}_{i,j})\in\mathbb{R}^{d_e}$ and normalizes it as $e_{i,j}=v_{i,j}/\|v_{i,j}\|_2$. 
All clustering and representative-selection steps are performed on the normalized embeddings. 
We use cosine distance, $\mathrm{dist}(e,e')=1-e^\top e'$, so semantically similar responses are close even when they differ in surface form. 
The encoder choice, decoding parameters, and clustering hyperparameters are fixed within each run and reported in the experimental setup.

For each prompt $x_j$, the server clusters the embeddings $\{e_{i,j}\}_{i=1}^{K}$ using a density-based clustering rule. 
Density-based clustering is useful because the number of semantic response modes is unknown and isolated responses can be treated as outliers. 
The server selects the largest non-outlier cluster as the consensus cluster $C_j^\star$. 
If multiple clusters have the same size, the server selects the cluster with the smallest average pairwise cosine distance, with any remaining tie broken by the smallest client index. 
If all responses are marked as outliers, the server uses all responses as the candidate set by setting $C_j^\star=\{1,\dots,K\}$. 
Given $C_j^\star$, the server computes its normalized centroid $\bar{e}_j=\sum_{i\in C_j^\star}e_{i,j}/\|\sum_{i\in C_j^\star}e_{i,j}\|_2$. 
The pseudo-label is the original response closest to this centroid: $i^\star=\arg\min_{i\in C_j^\star}\mathrm{dist}(e_{i,j},\bar{e}_j)$ and $y_j^\star=\hat{y}_{i^\star,j}$. 
If multiple responses are equally close to the centroid, ties are broken by the shortest response length and then by the smallest client index. 
This keeps the pseudo-label as a real client-generated response rather than requiring the server to synthesize a new target. The server constructs $\tilde{D}_{\mathrm{pub}}=\{(x_j,y_j^\star)\}_{j=1}^{M}$ and broadcasts it to all clients. 
Each client then continues fine-tuning on $D_i\cup\tilde{D}_{\mathrm{pub}}$ using the standard causal language-modeling loss. 
Private examples and consensus pseudo-labeled examples receive equal loss weight. 
The local objective is
\[
\mathcal{L}_i(\theta_i)
=
\mathbb{E}_{(x,y)\sim D_i}
\big[
\ell_{\mathrm{CE}}(\mathcal{M}_{\theta_i}(x),y)
\big]
+
\mathbb{E}_{(x,y^\star)\sim\tilde{D}_{\mathrm{pub}}}
\big[
\ell_{\mathrm{CE}}(\mathcal{M}_{\theta_i}(x),y^\star)
\big],
\]
where $\ell_{\mathrm{CE}}$ is applied to response tokens. 
In the convergence analysis, this corresponds to the normalized objective weights $\alpha=\beta=1/2$. 
\fedcofit performs aggregation in semantic behavior space rather than parameter space. 
The majority-cluster rule favors responses supported by multiple clients, while centroid-based representative selection keeps the pseudo-label as a natural response generated by an actual client. 
Since clients communicate only public-prompt behaviors and receive pseudo-labels, the protocol remains communication-efficient, architecture-agnostic, and compatible with heterogeneous LLMs.

\section{Theory}
\label{theory}

\subsection{Communication Complexity}
\label{sec:complexity}

Semantic Consensus avoids parameter exchange and instead communicates model behavior on a shared public prompt set. Let $M=|D_{\mathrm{pub}}|$ be the number of public prompts per round, $K$ the number of clients, and $\bar{\ell}$ the average response length in tokens. In each round, every client uploads one text response per public prompt, so the upload scales as $KM\bar{\ell}$. The server computes semantic embeddings and performs clustering and representative selection locally. It then broadcasts one pseudo-label per public prompt to all clients, so the download also scales as $KM\bar{\ell}$. Let $c_t$ be the average number of bytes per token. The total per-round communication, upload plus download and aggregated over all clients, is
$C_{\mathrm{SC}}=\Theta\!\big(KM\bar{\ell}c_t\big)$,
which is independent of model size and does not require aligned architectures or adapter shapes.

\paragraph{Parameter sharing baselines.}
In contrast, parameter-centric FL methods communicate model updates. For parameter subsampling, client $i$ transmits a fraction $\rho\in(0,1]$ of an update vector with $P_i$ parameters. If $c_{\mathrm{param}}$ denotes bytes per communicated scalar, the per-round total communication scales as
$C_{\mathrm{subsample}}=\Theta\!\big(K\rho P_i c_{\mathrm{param}}\big)$,
up to comparable download cost for broadcasting aggregated updates.

\paragraph{LoRA aggregation.}
Suppose LoRA is applied to $L_{\mathrm{layers}}$ layers at client $i$, with rank $r_i$, and layer $\ell$ has dimensions $(d^{\mathrm{in}}_\ell,d^{\mathrm{out}}_\ell)$. The LoRA parameter count is $\sum_{\ell=1}^{L_{\mathrm{layers}}} r_i(d^{\mathrm{in}}_\ell+d^{\mathrm{out}}_\ell)$, so the per-round total communication scales as
$C_{\mathrm{LoRA}}
=
\Theta\!\left(
Kc_{\mathrm{param}}
\sum_{\ell=1}^{L_{\mathrm{layers}}}
r_i(d^{\mathrm{in}}_\ell+d^{\mathrm{out}}_\ell)
\right)$,
up to method-specific overhead. This cost scales with model geometry and LoRA rank and assumes aligned adapter shapes. Semantic Consensus instead scales with $(K,M,\bar{\ell})$ and remains valid under LoRA, architecture, tokenizer, and access heterogeneity.

\paragraph{Example: Llama 3.1 405B scale.}
Consider $K=10$ clients, $M=1024$ prompts per round, average response length $\bar{\ell}=128$, and $c_t\approx 2$ bytes/token. 
Semantic Consensus communicates only public-prompt responses and pseudo-labels, so its total upload-plus-download cost is
\[
C_{\mathrm{SC}}
\approx
2KM\bar{\ell}c_t
=
2\cdot10\cdot1024\cdot128\cdot2
\approx
5.2~\mathrm{MiB},
\]
where the leading factor $2$ accounts for upload and download. 
Now consider a Llama~3.1--405B-scale backbone with $L_{\mathrm{layers}}=126$, hidden size $d_{\mathrm{model}}=16{,}384$, FFN size $d_{\mathrm{ff}}=53{,}248$, FP16 communication $c_{\mathrm{param}}=2$, and LoRA rank $r=32$. 
If LoRA is applied only to $W_q$ and $W_v$, the upload cost is
$C^{\uparrow}_{\mathrm{LoRA}(q,v)}=c_{\mathrm{param}}L_{\mathrm{layers}}2r(d_{\mathrm{model}}+d_{\mathrm{model}})=528{,}482{,}304$ bytes $\approx504.0~\mathrm{MiB}$, or $1008.0~\mathrm{MiB}$ including download, which is $\approx194\times$ larger than Semantic Consensus. 
If LoRA is applied to all attention projections and the MLP, the upload cost is
$C^{\uparrow}_{\mathrm{LoRA}(\mathrm{attn+MLP})}=c_{\mathrm{param}}L_{\mathrm{layers}}r\{4(d_{\mathrm{model}}+d_{\mathrm{model}})+3(d_{\mathrm{model}}+d_{\mathrm{ff}})\}\approx2{,}741{,}501{,}952$ bytes $\approx2614.5~\mathrm{MiB}$, or $5229.0~\mathrm{MiB}$ including download, which is $\approx1006\times$ larger. 
Thus, for a 405B-scale model, Semantic Consensus reduces total per-round communication by roughly $194\times$--$1006\times$ while remaining independent of model size. Additional communication examples are provided in Appendix.\ref{app:llama2:example}

\subsection{Convergence}
\label{sec:convergence}

FedCoFiT alternates between local updates on private data and local updates on a pseudo-labeled public prompt set. Since pseudo-labels may be noisy and the public prompt distribution may differ from a reference prompt distribution, each client performs SGD on a time-varying surrogate objective whose gradient is a biased approximation of a fixed reference objective gradient. In the non-IID setting, client distributions differ, introducing additional variability through client heterogeneity. We provide a descent-to-stationarity guarantee where the bias is controlled by covariate shift, pseudo-label noise, and finite-sample concentration from using $B_{\mathrm{pub}}$ public examples per step.

Let there be $K$ clients. Client $i$ draws i.i.d. data from $\mathcal{P}_i(x,y)=\mathcal{P}_{i,X}(x)\mathcal{P}_i(y\mid x)$. Define the global mixture $\mathcal{P}(x,y)=K^{-1}\sum_{i=1}^{K}\mathcal{P}_i(x,y)$, with marginal $\mathcal{P}_X$ and conditional $\mathcal{P}(y\mid x)$. The private objectives are $F_i(\theta)=\mathbb{E}_{(x,y)\sim\mathcal{P}_i}[\ell(\theta;x,y)]$ and $F(\theta)=K^{-1}\sum_{i=1}^{K}F_i(\theta)$.

At communication round $q$, public prompts follow $Q_X^{(q)}$. Given prompt $x$, the pseudo-labeling mechanism induces $\widetilde{\mathcal{P}}^{(q)}(\cdot\mid x)$ over pseudo-labels $\tilde{Y}$. Since the public pool is unlabeled, define the reference public objective as $G(\theta)=\mathbb{E}_{x\sim\mathcal{P}_X}\mathbb{E}_{y\sim\mathcal{P}(\cdot\mid x)}[\ell(\theta;x,y)]$. For the equal-weight objective used in \fedcofit, define the round-$q$ surrogate objective as
\begin{equation}
\Phi^{(q)}(\theta)
=
\frac{1}{2}F(\theta)
+
\frac{1}{2}G(\theta;\widetilde{\mathcal{P}}^{(q)},Q_X^{(q)}),
\end{equation}
where $G(\theta;\widetilde{\mathcal{P}}^{(q)},Q_X^{(q)})=\mathbb{E}_{x\sim Q_X^{(q)}}\mathbb{E}_{\tilde{Y}\sim\widetilde{\mathcal{P}}^{(q)}(\cdot\mid x)}[\ell(\theta;x,\tilde{Y})]$. We measure stationarity with respect to $\bar{\Phi}(\theta)=\frac{1}{2}F(\theta)+\frac{1}{2}G(\theta)$.

\begin{assumption}[Smoothness and stochastic gradients]
\label{ass:smooth_sgd_noniid}
Assume: (i) $\bar{\Phi}$ is $\Lambda$-smooth along $\{\theta^t\}_{t=0}^{T}$; and (ii) for each SGD step $t$, $\mathbb{E}[g^t\mid\theta^t,q(t)]=\nabla\Phi^{(q(t))}(\theta^t)$ and
$
\mathbb{E}\!\left[\|g^t-\nabla\Phi^{(q(t))}(\theta^t)\|^2\mid\theta^t,q(t)\right]\le\sigma^2+\frac{1}{4}\zeta^2.
$
Here $\sigma^2$ captures minibatch noise, and $\zeta^2$ captures client heterogeneity.
\end{assumption}

\begin{assumption}[Client heterogeneity]
\label{ass:heterogeneity_noniid}
Along the iterates $\{\theta^t\}_{t=0}^{T}$, assume $K^{-1}\sum_{i=1}^{K}\|\nabla F_i(\theta^t)-\nabla F(\theta^t)\|^2\le\zeta^2$ for all $t\in\{0,\dots,T\}$.
\end{assumption}

\begin{assumption}[Public shift, pseudo-label noise, and finite public sampling]
\label{ass:shift_noise_noniid}
Assume: (i) for each round $q$, $\mathrm{KL}(\mathcal{P}_X\|Q_X^{(q)})\le\tau_q$ and $\tau=\max_q\tau_q$; (ii) for all $(\theta,x,y)$, $\|\nabla_\theta\ell(\theta;x,y)\|\le G_{\mathrm{grad}}$; (iii) for each round $q$, $\eta_q=\mathbb{E}_{x\sim Q_X^{(q)}}[\mathrm{TV}(\mathcal{P}(\cdot\mid x),\widetilde{\mathcal{P}}^{(q)}(\cdot\mid x))]$ and $\eta_{\mathrm{pl}}=\max_q\eta_q$; and (iv) whenever a client forms a public-gradient estimate at step $t$, it uses $B_{\mathrm{pub}}$ i.i.d. samples $\{(x_k,\tilde{Y}_k)\}_{k=1}^{B_{\mathrm{pub}}}$ with $x_k\sim Q_X^{(q(t))}$ and $\tilde{Y}_k\sim\widetilde{\mathcal{P}}^{(q(t))}(\cdot\mid x_k)$, independent of $\theta^t$ conditional on the past.
\end{assumption}

Define the high-probability public-gradient gap
\begin{equation}
\Delta_T(\rho)
=
G_{\mathrm{grad}}\sqrt{2\tau}
+
2G_{\mathrm{grad}}\eta_{\mathrm{pl}}
+
4G_{\mathrm{grad}}
\sqrt{
\frac{2}{B_{\mathrm{pub}}}
\left(
d_\theta\log 5+\log\frac{2T}{\rho}
\right)
}.
\label{eq:DeltaT_noniid}
\end{equation}

\begin{lemma}[Surrogate--reference public gradient gap]
\label{lem:sur_ref_gap_noniid}
Under Assumption~\ref{ass:shift_noise_noniid}, with probability at least $1-\rho$ over the sampling of public prompts and pseudo-labels across the $T$ SGD steps, $\|\nabla_\theta G_t(\theta^t)-\nabla_\theta G(\theta^t)\|\le\Delta_T(\rho)$ for all $t\in\{0,\dots,T-1\}$, where $G_t(\theta)$ is the empirical public objective formed from the $B_{\mathrm{pub}}$ samples in Assumption~\ref{ass:shift_noise_noniid}(4).
\end{lemma}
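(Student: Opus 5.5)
We split the gap with the triangle inequality into three pieces, one for each term of $\Delta_T(\rho)$. Write $G^{(q)}(\theta)=G(\theta;\widetilde{\mathcal{P}}^{(q)},Q_X^{(q)})$ for the population surrogate of round $q$. Write $G^{\mathrm{mix}}(\theta)=\mathbb{E}_{x\sim Q_X^{(q)}}\mathbb{E}_{y\sim\mathcal{P}(\cdot\mid x)}[\ell(\theta;x,y)]$ for the objective with shifted prompts but true conditionals. Then
\[
\|\nabla G_t-\nabla G\|\le\|\nabla G_t-\nabla G^{(q)}\|+\|\nabla G^{(q)}-\nabla G^{\mathrm{mix}}\|+\|\nabla G^{\mathrm{mix}}-\nabla G\|,
\]
all evaluated at $\theta^t$ with $q=q(t)$.

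\textbf{Covariate shift term.} Interchange gradient and expectation, which is justified by the bounded gradients in Assumption~\ref{ass:shift_noise_noniid}(ii). Set $h(x)=\mathbb{E}_{y\sim\mathcal{P}(\cdot\mid x)}\nabla\ell(\theta;x,y)$, so that $\|h(x)\|\le G_{\mathrm{grad}}$. Then
\[
\|\nabla G^{\mathrm{mix}}-\nabla G\|=\|\mathbb{E}_{Q}h-\mathbb{E}_{\mathcal{P}_X}h\|\le 2G_{\mathrm{grad}}\,\mathrm{TV}(\mathcal{P}_X,Q_X^{(q)}).
\]
This uses the variational form of TV for vector-valued functions, obtained by taking the supremum over unit directions $u$ of $u^\top h$. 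Pinsker's inequality gives $\mathrm{TV}\le\sqrt{\tau/2}$, so this term is at most $G_{\mathrm{grad}}\sqrt{2\tau}$.

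\textbf{Pseudo-label noise term.} Apply the same argument conditionally on $x$. For fixed $x$, the difference of the $y$-expectations of $\nabla\ell$ is at most $2G_{\mathrm{grad}}\mathrm{TV}(\mathcal{P}(\cdot\mid x),\widetilde{\mathcal{P}}^{(q)}(\cdot\mid x))$. Averaging over $x\sim Q_X^{(q)}$ gives at most $2G_{\mathrm{grad}}\eta_q\le 2G_{\mathrm{grad}}\eta_{\mathrm{pl}}$.

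\textbf{Finite-sample term (the main step).} Conditional on the past, $\nabla G_t(\theta^t)-\nabla G^{(q)}(\theta^t)$ is an average of $B_{\mathrm{pub}}$ i.i.d. mean-zero vectors $Z_k$ with $\|Z_k\|\le 2G_{\mathrm{grad}}$. The conditional independence comes from Assumption~\ref{ass:shift_noise_noniid}(iv), so $\theta^t$ may be treated as fixed.

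We bound the norm with a $1/2$-net $\mathcal{N}$ of the unit sphere in $\mathbb{R}^{d_\theta}$, which satisfies $|\mathcal{N}|\le 5^{d_\theta}$. The standard net argument gives $\|v\|\le 2\max_{u\in\mathcal{N}}u^\top v$. For each fixed $u$, Hoeffding's inequality applies to the scalars $u^\top Z_k\in[-2G_{\mathrm{grad}},2G_{\mathrm{grad}}]$:
\[
\mathbb{P}\bigl(u^\top \bar Z\ge s\bigr)\le\exp\!\bigl(-B_{\mathrm{pub}}s^2/(8G_{\mathrm{grad}}^2)\bigr).
\]
A union bound over the net and over the $T$ steps, with two-sided tails supplying the factor $2$, allocates failure probability $\rho/(2T)$ per direction group. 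This yields $s=2G_{\mathrm{grad}}\sqrt{\tfrac{2}{B_{\mathrm{pub}}}(d_\theta\log5+\log\tfrac{2T}{\rho})}$. The net factor of $2$ then produces exactly the coefficient $4G_{\mathrm{grad}}$ appearing in~\eqref{eq:DeltaT_noniid}.

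The main obstacle is the bookkeeping of constants (range $4G_{\mathrm{grad}}$, net factor $2$, the $2T/\rho$ union bound) and the measurability or conditioning argument that lets $\theta^t$ be fixed at each step. We will handle the latter by conditioning on the filtration and applying the tail bound step by step before taking the union bound over $t$.

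Summing the three bounds on the event of probability at least $1-\rho$ gives $\Delta_T(\rho)$ simultaneously for all $t$, which proves the statement.
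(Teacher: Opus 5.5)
Your proposal is correct and follows essentially the same route as the paper. You use the same three-term triangle decomposition: covariate shift via the TV expectation bound plus Pinsker, pseudo-label noise via conditional TV averaged over $Q_X^{(q)}$, and finite-sample concentration via scalar Hoeffding on a $1/2$-net with $|\mathcal{N}|\le 5^{d_\theta}$, followed by a union bound over the $T$ steps. Your constant bookkeeping (centered summands bounded by $2G_{\mathrm{grad}}$, net factor $2$, two-sided tails) reproduces the coefficient $4G_{\mathrm{grad}}$ and the $\log(2T/\rho)$ term in $\Delta_T(\rho)$ exactly.
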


\begin{theorem}[Stationarity under shift, pseudo-label noise, and non-IID heterogeneity]
\label{thm:stationarity_noniid}
Let $\{\theta^t\}_{t=0}^{T}$ be produced by SGD, $\theta^{t+1}=\theta^t-\eta g^t$ for $t=0,\dots,T-1$. Under Assumptions~\ref{ass:smooth_sgd_noniid}, \ref{ass:heterogeneity_noniid}, and \ref{ass:shift_noise_noniid}, choose $\eta\le1/(4\Lambda)$ and fix $\rho\in(0,1)$. Then, on the event of Lemma~\ref{lem:sur_ref_gap_noniid},
\begin{equation}
\frac{1}{T}
\sum_{t=0}^{T-1}
\mathbb{E}\!\left[
\|\nabla \bar{\Phi}(\theta^t)\|^2
\right]
\le
\frac{
4\big(\bar{\Phi}(\theta^0)-\bar{\Phi}^\star\big)
}{
\eta T
}
+
2\Lambda\eta
\big(
\sigma^2+\frac{1}{4}\zeta^2
\big)
+
\frac{3}{4}
\Delta_T(\rho)^2,
\label{eq:main_stationarity_noniid}
\end{equation}
where $\bar{\Phi}^\star=\inf_\theta\bar{\Phi}(\theta)$ and the expectation is over SGD minibatch randomness.
\end{theorem}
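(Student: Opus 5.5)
The proof is a biased-SGD descent argument for $\bar{\Phi}$. We treat the surrogate gradient $\nabla\Phi^{(q(t))}$ as $\nabla\bar{\Phi}$ plus a bias, and control that bias with Lemma~\ref{lem:sur_ref_gap_noniid}.

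\textbf{Step 1: bias.} Write $b^t=\nabla\Phi^{(q(t))}(\theta^t)-\nabla\bar{\Phi}(\theta^t)$. The private parts of $\Phi^{(q)}$ and $\bar{\Phi}$ coincide, so $b^t=\frac{1}{2}\big(\nabla G_t(\theta^t)-\nabla G(\theta^t)\big)$. Here we identify the surrogate public gradient with the empirical public objective $G_t$ built from the $B_{\mathrm{pub}}$ samples, which is how the gradient in Assumption~\ref{ass:smooth_sgd_noniid}(ii) is actually formed. On the event of Lemma~\ref{lem:sur_ref_gap_noniid} this gives $\|b^t\|\le\frac{1}{2}\Delta_T(\rho)$ for all $t<T$.

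\textbf{Step 2: one-step descent.} By $\Lambda$-smoothness of $\bar{\Phi}$ along the iterates,
\[
\bar{\Phi}(\theta^{t+1})\le\bar{\Phi}(\theta^t)-\eta\langle\nabla\bar{\Phi}(\theta^t),g^t\rangle+\tfrac{\Lambda\eta^2}{2}\|g^t\|^2 .
\]
We take the conditional expectation given $\theta^t,q(t)$. Using Assumption~\ref{ass:smooth_sgd_noniid}(ii),
\[
\mathbb{E}\|g^t\|^2\le 2\|\nabla\bar{\Phi}\|^2+2\|b^t\|^2+\sigma^2+\tfrac{1}{4}\zeta^2 .
\]
The inner-product term satisfies
\[
-\langle\nabla\bar{\Phi},\nabla\bar{\Phi}+b^t\rangle\le-\tfrac{1}{2}\|\nabla\bar{\Phi}\|^2+\tfrac{1}{2}\|b^t\|^2
\]
by Young's inequality. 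With $\eta\le 1/(4\Lambda)$ we have $\Lambda\eta\le 1/4$, so the $\Lambda\eta^2\|\nabla\bar{\Phi}\|^2$ term is at most $\frac{\eta}{4}\|\nabla\bar{\Phi}\|^2$. Combining these gives
\[
\mathbb{E}\,\bar{\Phi}(\theta^{t+1})\le\bar{\Phi}(\theta^t)-\tfrac{\eta}{4}\|\nabla\bar{\Phi}(\theta^t)\|^2+\eta\big(\tfrac12+\Lambda\eta\big)\|b^t\|^2+\tfrac{\Lambda\eta^2}{2}\big(\sigma^2+\tfrac14\zeta^2\big).
\]

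\textbf{Step 3: telescoping.} We sum over $t=0,\dots,T-1$, take total expectation, use $\bar{\Phi}\ge\bar{\Phi}^\star$, and multiply by $4/(\eta T)$. This yields the $4(\bar{\Phi}(\theta^0)-\bar{\Phi}^\star)/(\eta T)$ term and a variance term $2\Lambda\eta(\sigma^2+\frac14\zeta^2)$, which matches the statement. The bias contribution is
\[
4\big(\tfrac12+\Lambda\eta\big)\cdot\tfrac14\Delta_T^2\le\tfrac34\Delta_T^2,
\]
using $\Lambda\eta\le\frac14$, again matching the statement.

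\textbf{Main obstacle.} The delicate point is interchanging ``on the event'' with the expectation over SGD minibatch randomness. The event of Lemma~\ref{lem:sur_ref_gap_noniid} concerns the public-sample randomness. Since Assumption~\ref{ass:shift_noise_noniid}(iv) makes the public samples independent of $\theta^t$ conditional on the past, I would argue conditionally on the public-sample draws: fix a realization in the good event, and the bound $\|b^t\|\le\Delta_T/2$ then holds deterministically for every $t$, so it can be inserted pointwise before taking expectations. One also has to check that the variance assumption still applies conditionally on those draws. The constants are loose enough that a factor-of-two slack in the Young step leaves the stated bound intact.
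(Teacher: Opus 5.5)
Your Steps 1--3 match the paper's proof step for step. The paper uses the same bias $b^t=\frac12(\nabla G_t(\theta^t)-\nabla G(\theta^t))$ bounded by $\Delta_T(\rho)/2$ via Lemma~\ref{lem:sur_ref_gap_noniid}, the same expansion giving the coefficients $\eta(\frac12-\Lambda\eta)$ and $\eta(\frac12+\Lambda\eta)$, and the same telescoping constants. The paper simply asserts the bias bound ``on the event'' and never addresses the issue you raise as the main obstacle.

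Your proposed fix for that obstacle does not work as stated. The event of Lemma~\ref{lem:sur_ref_gap_noniid} is defined through $\theta^t$, and $\theta^t$ depends on the minibatch randomness. So fixing a realization of the public draws does not place you in the event. Conditioning on the event itself would also generally break $\mathbb{E}[\xi^t\mid\cdot]=0$.

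A clean way out is to keep $\|b^t\|^2$ inside the expectation until after telescoping and bound $\mathbb{E}\|b^t\|^2$ directly:
\begin{itemize}
\item Split $\nabla G_t(\theta^t)-\nabla G(\theta^t)$ into two parts.
\item The sampling error $\widehat g^{(q)}(\theta^t)-\nabla\widetilde R_Q^{(q)}(\theta^t)$ has conditional second moment at most $G_{\mathrm{grad}}^2/B_{\mathrm{pub}}$ by Assumption~\ref{ass:shift_noise_noniid}(iv).
\item The remaining shift and label-noise part is bounded deterministically by $G_{\mathrm{grad}}\sqrt{2\tau}+2G_{\mathrm{grad}}\eta_{\mathrm{pl}}$.
\item Minkowski, together with $4\sqrt{2(d_\theta\log5+\log(2T/\rho))}\ge 1$, then gives $\mathbb{E}\|b^t\|^2\le\Delta_T(\rho)^2/4$.
\end{itemize}
With this, the stated inequality holds unconditionally and no high-probability event is needed. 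Alternatively, you can read $\nabla\Phi^{(q)}$ literally as the population surrogate in Assumption~\ref{ass:smooth_sgd_noniid}(ii). Then the bias is bounded deterministically by the shift and label-noise terms alone.
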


\paragraph{Remark.}
The full proof is available in the appendix.\ref{app:convergence_full}. The term $\zeta^2/4$ captures client heterogeneity under the equal-weight global SGD abstraction. An analysis with multiple explicit local steps between communication rounds typically introduces additional drift terms depending on the number of local steps and the heterogeneity level.

\subsection{Privacy Analysis}
\label{sec:privacy}

Federated LoRA-based fine-tuning methods such as FlexLoRA and FLoRA keep raw client data local while aggregating adapter-side information at the server. Prior work shows that PEFT can be combined with differential privacy through mechanisms such as DP-SGD~\citep{liu2024differentially}. Motivated by this, we introduce DP-\fedcofit and show that \fedcofit admits a record-level differential privacy guarantee when each client's local LoRA fine-tuning mechanism is differentially private.

Let $\phi_i$ denote the trainable LoRA parameters of client $i$, with the backbone frozen. At communication round $q$, client $i$ applies a randomized local training mechanism $\phi_i^{(q)}\leftarrow\mathcal{A}_{i,q}(D_i,H^{(q-1)})$, where $H^{(q-1)}$ is the transcript up to round $q-1$. Assume that, for every fixed history $H^{(q-1)}$, $\mathcal{A}_{i,q}$ is $(\varepsilon_{i,q},\delta_{i,q})$-differentially private with respect to $D_i$. After local training, client $i$ generates responses on $D_{\mathrm{pub}}=\{x_j\}_{j=1}^{M}$ and releases $\mathcal{R}_i^{(q)}=\{\hat{y}_{i,j}^{(q)}\}_{j=1}^{M}$, where $\hat{y}_{i,j}^{(q)}\leftarrow\mathcal{M}_i(x_j;\phi_i^{(q)})$.

\begin{definition}[Neighboring datasets]
For a fixed client $i$, two datasets $D_i$ and $D_i'$ are neighboring if they differ in one training example.
\end{definition}

\begin{theorem}
\label{thm:dp-fedcofit}
Suppose that, for each communication round $q$, the client-side mechanism $\mathcal{A}_{i,q}(\cdot,H^{(q-1)})$ is $(\varepsilon_{i,q},\delta_{i,q})$-differentially private with respect to $D_i$. Then the released public-prompt responses $\mathcal{R}_i^{(q)}$ are also $(\varepsilon_{i,q},\delta_{i,q})$-differentially private with respect to $D_i$. Moreover, the overall multi-round mechanism induced by client $i$ over $Q$ communication rounds is $(\sum_{q=1}^{Q}\varepsilon_{i,q},\sum_{q=1}^{Q}\delta_{i,q})$-differentially private with respect to $D_i$.
\end{theorem}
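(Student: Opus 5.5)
The plan is to derive the single-round statement from the post-processing property of differential privacy, and the multi-round statement from adaptive sequential composition.

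\textbf{Single round.} Fix a history $H^{(q-1)}$. The released object $\mathcal{R}_i^{(q)}$ is obtained as $\mathcal{R}_i^{(q)}=f(\phi_i^{(q)},\xi)$. Here $f$ runs the frozen backbone with adapter $\phi_i^{(q)}$ on the fixed public prompts $x_1,\dots,x_M$, and $\xi$ is the decoding randomness (sampling seeds), drawn independently of $D_i$.

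The key observation is that $f$ touches $D_i$ only through $\phi_i^{(q)}$:
\begin{itemize}
\item the public prompts, the backbone, and the decoding parameters are data-independent;
\item the history $H^{(q-1)}$ is fixed by conditioning.
\end{itemize}
So $\mathcal{R}_i^{(q)}$ is a (randomized) post-processing of the output of $\mathcal{A}_{i,q}(D_i,H^{(q-1)})$.

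To make this rigorous, fix neighboring $D_i,D_i'$ and a measurable set $S$ of response tuples. Write
\[
\Pr[\mathcal{R}_i^{(q)}\in S]=\mathbb{E}_{\phi\sim\mathcal{A}_{i,q}(D_i,H^{(q-1)})}\big[h_S(\phi)\big],
\]
where $h_S(\phi)=\Pr_\xi[f(\phi,\xi)\in S]\in[0,1]$. The DP inequality extends from indicator functions to $[0,1]$-valued functions, for example by the layer-cake representation $h_S(\phi)=\int_0^1 \mathbf{1}\{h_S(\phi)>s\}\,ds$, applying DP to each level set. This gives
\[
\mathbb{E}_{D_i}[h_S]\le e^{\varepsilon_{i,q}}\,\mathbb{E}_{D_i'}[h_S]+\delta_{i,q},
\]
which is exactly the claimed $(\varepsilon_{i,q},\delta_{i,q})$ guarantee for $\mathcal{R}_i^{(q)}$.

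\textbf{Multiple rounds.} The round-$q$ mechanism depends adaptively on $H^{(q-1)}$. This history contains earlier releases of client $i$, and also server pseudo-labels computed from all clients' responses. From client $i$'s perspective, those pseudo-labels are post-processing of $\mathcal{R}_i^{(1:q-1)}$ together with other clients' data, which is independent of $D_i$.

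I would therefore model the whole transcript as an adaptive composition of $Q$ mechanisms. Each mechanism is $(\varepsilon_{i,q},\delta_{i,q})$-DP for every fixed history, so the basic adaptive composition theorem yields $(\sum_q\varepsilon_{i,q},\sum_q\delta_{i,q})$-DP for the full transcript. Note that one additional step is needed: the client's final released responses, and any server outputs, are again post-processing of the transcript.

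\textbf{Main obstacle.} The delicate point is justifying that the other clients' contributions and the server's clustering do not leak $D_i$ beyond $\mathcal{R}_i$. I would handle this by treating the other clients' data as fixed and their randomness as independent of $D_i$, which makes the server step a randomized post-processing of client $i$'s outputs.

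A secondary care point is the $\delta$ terms under adaptivity. For these I would cite the standard adaptive composition result (e.g.\ Dwork--Roth, Theorem 3.16 with its adaptive extension) rather than reprove it.
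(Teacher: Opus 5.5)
Your proposal is correct and follows the same route as the paper. Each round's release $\mathcal{R}_i^{(q)}$ is post-processing of the $(\varepsilon_{i,q},\delta_{i,q})$-DP output $\phi_i^{(q)}$ on fixed public prompts, the server's embedding, clustering and broadcast are further post-processing, and adaptive basic composition over the $Q$ rounds gives the summed parameters. Your extra details (the layer-cake extension to $[0,1]$-valued test functions for decoding randomness, and treating other clients' responses and pseudo-labels as randomized post-processing with randomness independent of $D_i$) spell out steps the paper states in one line.
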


\begin{proof}[Proof sketch]
At communication round $q$, the local LoRA fine-tuning mechanism is $(\varepsilon_{i,q},\delta_{i,q})$-DP by assumption. The released responses $\mathcal{R}_i^{(q)}$ depend only on the resulting private model state and the public prompts, so they are DP by post-processing. Server-side embedding, clustering, representative selection, and broadcast are also post-processing. Adaptive composition across communication rounds gives the stated bound.
\end{proof}

Details are given in appendix \ref{app:privacy} .In DP-\fedcofit, sharing responses on the public prompt set does not break the privacy guarantee, as those responses are outputs of a differentially private client-side fine-tuning mechanism. The guarantee is \emph{record-level} with respect to each client's local dataset.

\section{Empirical Evaluation}
\label{sec:experiments}

We evaluate \fedcofit along three axes: performance, communication, and efficiency. In all experiments, clients fine-tune locally on private data and collaborate through semantic consensus rounds using a shared public prompt set $D_{\mathrm{pub}}$ that was sampled from the dataset and excluded from the training data (the effect of this shared public prompt set size and distribution is studied in Appendix \ref{main:eff:promp:size} and \ref{app:prompt_distribution_shift}). We report task performance on held-out evaluation benchmarks, total communication volume measured as gigabytes uploaded/downloaded between clients and server across all rounds, and end-to-end wall-clock runtime and energy consumption. All methods are run with the same number of clients and rounds, and we report the mean score across clients. Full training hyperparameters, hardware details, and additional results are provided in Appendix~\ref{app:exp:details}. In the empirical evaluation, we consider three adaptation regimes: instruction compliance, knowledge generalization, and chat quality. For instruction compliance, clients fine-tune on Dolly-15k \cite{zhang2024towards}, and Alpaca \cite{taori2023stanford}. We evaluate instruction-following behaviour using IFEval benchmark \cite {zhou2023instruction}. For knowledge generalization, clients fine-tune on Wizard \cite{luo2023wizardmath} and OpenOrca \cite{OpenOrca}, we evaluate on MMLU benchmark \cite{hendrycks2020measuring}. For chat quality, clients fine-tune on ShareGPT \cite{chiang2023vicuna} and UltraChat \cite{ding2023enhancing} and evaluate on MT-Bench \cite{zheng2023judging}. For each setting, we compare \fedcofit against parameter aggregation baselines (e.g., FLoRA, FedIT, FlexLoRA) under matched federation budgets. We report final scores over rounds, together with total communication bytes (upload and download). To connect communication to practical cost, we additionally report wall-clock runtime and energy usage measured with Lamarr Energy Tracker \cite{ai_energy_validation}. Finally, we perform ablations to study the effect of design choices in public prompt size $|D_{\mathrm{pub}}|$, public prompt distribution, and the semantic encoder
\footnote{Code available at~\url{https://anonymous.4open.science/r/FedCoFiT}}.

\paragraph{Experiment results:} We first evaluate the performance of \fedcofit under a homogeneous LoRA setting, where all clients use the same LoRA rank $r=32$. For TinyLlama, LLaMA, and LLaMA2, we use $10$, $10$, and $5$ clients, respectively, with a micro-batch size of $16$. We additionally use a public prompt budget $M=500$ prompts sampled from the fine-tuning dataset.  Table \ref{tab:main_results_all} compares \fedcofit with parameter-aggregation baselines across the three regimes. We also report the total communication in GB. Two trends are consistent across all three foundation models.
First, \fedcofit matches or exceeds the strongest federated baseline in task quality. On TinyLlama, \fedcofit attains the best IFEval score on Dolly ($51.19$) and competitive performance across MMLU and MT-Bench benchmarks. On LLaMA-7B, it achieves the top score on IFEval ($54.36$ Dolly), MMLU($35.78$ OpenOrca), and on LLaMA2-13B, it achieved the highest score across most settings (e.g., MMLU Wizard $45.02$). These results indicate that behavior-level consensus provides learning signals that are effective.
Second, these quality levels are achieved with orders of magnitude lower communication. Across all experiments and baselines, \fedcofit communication budget is in the range of $\approx 0.001-0.007 GB$, where parameter aggregation-based methods require substantially larger transfers, e.g., FLoRA ranges from $\approx 4-21 GB$ and FlexLoRA from $\approx 0.36-3.75 GB$.  Concretely, on LLaMA-7B instruction tuning (Dolly), \fedcofit uses $0.006GB$ versus $3.13GB$ (FedIT), $3.75GB$ (FlexLoRA), and $20.63GB$ (FLoRA), while achieving the highest IFEval score. Similar gaps hold for MMLU and MT-Bench, demonstrating that \fedcofit delivers comparable accuracy at a much smaller federation budget.
Overall, \fedcofit is consistently competitive in performance while minimizing communication, and this communication reduction translates into practical system benefits. For example, \fedcofit reduces end-to-end runtime, energy consumption, and estimated $\mathrm{CO_2}$ emissions relative to parameter aggregation (Figure \ref{eff-tinyllama}, details are in Appendix \ref{app:eff:eval}). As model size grows and adapter updates become more expensive to transmit, these efficiency gains become increasingly pronounced.

\begin{table*}[t]
\centering
\small
\vspace{-0.8cm}
\setlength{\tabcolsep}{4pt}        
\renewcommand{\arraystretch}{1}

\caption{Comparison of Semantic Consensus \fedcofit with baselines across instruction compliance (IFEval), knowledge generalization (MMLU), and chat quality (MT-Bench). $GB$ denotes the cumulative communication volume (in gigabytes) summed across all clients and all rounds.}
\label{tab:main_results_all}

\resizebox{\textwidth}{!}{%
\begin{tabular}{ll cc cc cc cc cc cc cc cc cc}
\toprule
\multirow{3}{*}{\textbf{Foundation}} & \multirow{3}{*}{\textbf{Method}}
& \multicolumn{4}{c}{\textbf{IFEval} $\uparrow$}
& \multicolumn{4}{c}{\textbf{MMLU} $\uparrow$}
& \multicolumn{4}{c}{\textbf{MT-Bench} $\uparrow$} \\
\cmidrule(lr){3-6}\cmidrule(lr){7-10}\cmidrule(lr){11-14}
& 
& \multicolumn{2}{c}{\textbf{Dolly-15K}} & \multicolumn{2}{c}{\textbf{Alpaca}}
& \multicolumn{2}{c}{\textbf{Wizard}}   & \multicolumn{2}{c}{\textbf{OpenOrca}}
& \multicolumn{2}{c}{\textbf{ShareGPT}} & \multicolumn{2}{c}{\textbf{UltraChat}} \\
\cmidrule(lr){3-4}\cmidrule(lr){5-6}\cmidrule(lr){7-8}\cmidrule(lr){9-10}\cmidrule(lr){11-12}\cmidrule(lr){13-14}
& 
& \textbf{Score} & \textbf{GB} & \textbf{Score} & \textbf{GB}
& \textbf{Score} & \textbf{GB} & \textbf{Score} & \textbf{GB}
& \textbf{Score} & \textbf{GB} & \textbf{Score} & \textbf{GB} \\
\midrule

\multirow{5}{*}{\textbf{TinyLlama}}
& Centralized (LoRA)                  & $49.32$ & -- & $52.03$ & -- & $21.32$ & -- & $21.36$ & -- & $2.74$ & -- & $2.61$ & -- \\
& FedIT                               & $43.82$ & $0.9$ & $49.21$ &$0.9$ & $22.19$ & $0.54$ & $21.38$ & $0.54$ & $2.53$ & $0.18$ & $2.65$ & $0.18$ \\
& FLoRA                               & $50.26$ & $11.35$ & $51.10$ & $11.35$ & $22.25$ & $3.96$ & $22.84$ & $3.96$ & $2.81$ & $1.98$ & $3.10$ & $1.98$ \\
& FlexLoRA                            & $50.73$ & $1.2$ & $50.45$ & $1.2$ & $21.08$ & $0.72$ & $21.80$ & $0.72$ & $2.73$ & $0.36$ & $2.84$ & $0.36$ \\
& \textbf{\fedcofit (ours)}  & $51.19$ & \textbf{$0.007$} & \textbf{$50.71$} & \textbf{$0.002$}
                                      & \textbf{$22.60$} & \textbf{$0.005$} & \textbf{$22.73$} & \textbf{$0.001$}
                                      & \textbf{$2.79$} & \textbf{$0.002$} & \textbf{$2.97$} & \textbf{$0.002$} \\
\midrule

\multirow{5}{*}{\textbf{LLaMA}}
& Centralized (LoRA)                  & $54.18$ & -- & $55.17$ & -- & $34.28$ & -- & $35.19$ & -- & $3.34$ & -- & $3.25$ & -- \\
& FedIT                               & $50.37$ & $3.13$ & $50.92$& $3.13$& $31.46$ & $1.88$ & $31.74$ & $1.88$ & $3.29$ & $0.625$ & $3.17$ & $0.625$ \\
& FLoRA                               & $53.29$ & $20.63$ & $54.91$ & $20.63$ & $34.92$ & $13.75$ & $35.21$ & $13.75$ & $3.41$ & $6.875$ & $3.31$ & $6.875$ \\
& FlexLoRA                            & $52.61$ & $3.75$ & $53.79$ & $3.75$ & $34.85$ & $2.5$ & $34.62$ & $2.5$ & $3.36$ & $1.25$ & $3.27$ & $1.25$ \\
& \textbf{\fedcofit (ours)}  & \textbf{$54.36$} & \textbf{$0.006$} & \textbf{$54.83$} & \textbf{$0.004$}
                                      & \textbf{$34.89$} & \textbf{$0.004$} & \textbf{$35.78$} & \textbf{$0.001$}
                                      & \textbf{$3.45$} & \textbf{$0.002$} & \textbf{$3.34$} & \textbf{$0.002$} \\
\midrule

\multirow{5}{*}{\textbf{LLaMA2}}
& Centralized (LoRA)                  & $58.71$ & -- & $59.24$ & -- & $43.27$ & -- & $43.85$ & -- &$5.86$ & -- & $6.13$& -- \\
& FedIT                               & $52.38$ & $2.62$ & $53.29$ & $2.62$ & $43.29$ & $1.57$ & $42.06$ & $1.57$ & $5.93$ & $0.52$ & $6.03$ & $0.52$ \\
& FLoRA                               & $58.49$ & $17.30$ & $60.25$ & $17.30$ & $44.57$ & $11.53$ & $44.52$ & $11.53$ & $6.03$ & $5.77$ & $6.21$ & $5.77$ \\
& FlexLoRA                            & $57.24$ & $3.14$ & $59.50$ & $3.14$ & $44.29$ & $2.1$ & $43.96$ & $2.1$ & $5.97$ & $1.05$ & $6.14$ & $1.05$ \\
& \textbf{\fedcofit (ours)}  & \textbf{$57.96$} & \textbf{$0.006$} & \textbf{$60.31$} & \textbf{$0.004$}
                                      & \textbf{$45.02$} & \textbf{$0.004$} & \textbf{$44.28$} & \textbf{$0.001$}
                                      & \textbf{$6.12$} & \textbf{$0.002$} & \textbf{$6.19$} & \textbf{$0.003$} \\

\bottomrule
\end{tabular}%
}
\end{table*}

\paragraph{Heterogeneous LoRA Ranks:}
%
In many federated deployments, clients cannot be assumed to use identical PEFT configurations due to different compute, memory, or personalization needs. To evaluate robustness under such heterogeneity, we consider a heterogeneous LoRA-rank setting in which clients fine-tune the same TinyLlama backbone but use different LoRA ranks. In this setting, we report task scores. Let $r_i$ denote the LoRA rank used by client $i$. We assign ranks across clients by sampling from a small set of representative ranks $r_i \in \{8,16,32\}$ and distributing across the $10$ clients. All other hyperparameters (target modules, learning rate, batch size, rounds, and public prompt budget) are kept identical to the homogeneous setting. Table~\ref{tab:heter_rank_tinyllama} reports performance under heterogeneous ranks across instruction following (IFEval), knowledge generalization (MMLU), and chat quality (MT-Bench). FedIT is marked as N/A because it does not support heterogeneous LoRA ranks. Parameter-aggregation such as FLoRA and FlexLoRA baselines degrade under rank heterogeneity because adapter shapes differ across clients. In contrast, \fedcofit aggregates client behaviour in prediction space and is therefore agnostic to rank choices. Empirically, \fedcofit remains stable under heterogeneous ranks and achieves the best or competitive scores across tasks, demonstrating that behaviour-level consensus naturally supports heterogeneous PEFT configurations.

\begin{table}[t]
\vspace{-0.5cm}
\centering
\small
\setlength{\tabcolsep}{10pt}
\renewcommand{\arraystretch}{1.05}

\caption{TinyLlama results under \textbf{heterogeneous LoRA ranks}. Clients use ranks $r_i \in \{8,16,32\}$ distributed across $K{=}10$ clients.}
\label{tab:heter_rank_tinyllama}

\resizebox{\columnwidth}{!}{%
\begin{tabular}{l cc cc cc}
\toprule
\multirow{2}{*}{\textbf{Method}}
& \multicolumn{2}{c}{\textbf{IFEval} $\uparrow$}
& \multicolumn{2}{c}{\textbf{MMLU} $\uparrow$}
& \multicolumn{2}{c}{\textbf{MT-Bench} $\uparrow$} \\
\cmidrule(lr){2-3}\cmidrule(lr){4-5}\cmidrule(lr){6-7}
& \textbf{Dolly-15K} & \textbf{Alpaca}
& \textbf{Wizard} & \textbf{OpenOrca}
& \textbf{ShareGPT} & \textbf{UltraChat} \\
\midrule
FedIT                      & \textit{N/A} & \textit{N/A} &  & \textit{N/A} & \textit{N/A} & \textit{N/A} \\
FLoRA                      & $49.87$ & $50.26$ & $20.93$ & $21.04$ &  $2.58$& $2.61$ \\
FlexLoRA                   & $49.38$ & $50.12$ & $20.19$ & $20.93$ & $2.46$& $2.57$ \\
\textbf{\fedcofit (ours)}  & $51.20$ & $50.39$ & $21.34$ & $22.08$ & $2.74$ & $2.84$ \\
\bottomrule
\end{tabular}%
}
\end{table}
\paragraph{Ablations and Robustness:}
We further evaluate the robustness of \fedcofit to the design of the public prompt set. 
First, we vary the public prompt budget $M=|D_{\mathrm{pub}}|$ and find that performance is stable for moderate prompt budgets, while very small public sets reduce consensus quality (Appendix~\ref{main:eff:promp:size}). Second, we test public-prompt distribution shift by replacing in-distribution Dolly prompts with prompts from Alpaca, OpenOrca, and ShareGPT; \fedcofit remains robust to instruction-style shifts but degrades under stronger chat-format shift (Appendix~\ref{app:prompt_distribution_shift}). We also ablate the consensus selection rule in Appendix \ref{app:representative_selection_ablation}. Finally, Appendix~\ref{app:eff:eval} reports wall-clock, energy, and $\mathrm{CO}_2$ efficiency results, and Appendix~\ref{app:encoder_ablation} ablates the semantic encoder choice, motivating our default use of \texttt{all-MiniLM-L6-v2}.

\begin{figure*}
    \centering
    \includegraphics[width=0.9\linewidth]{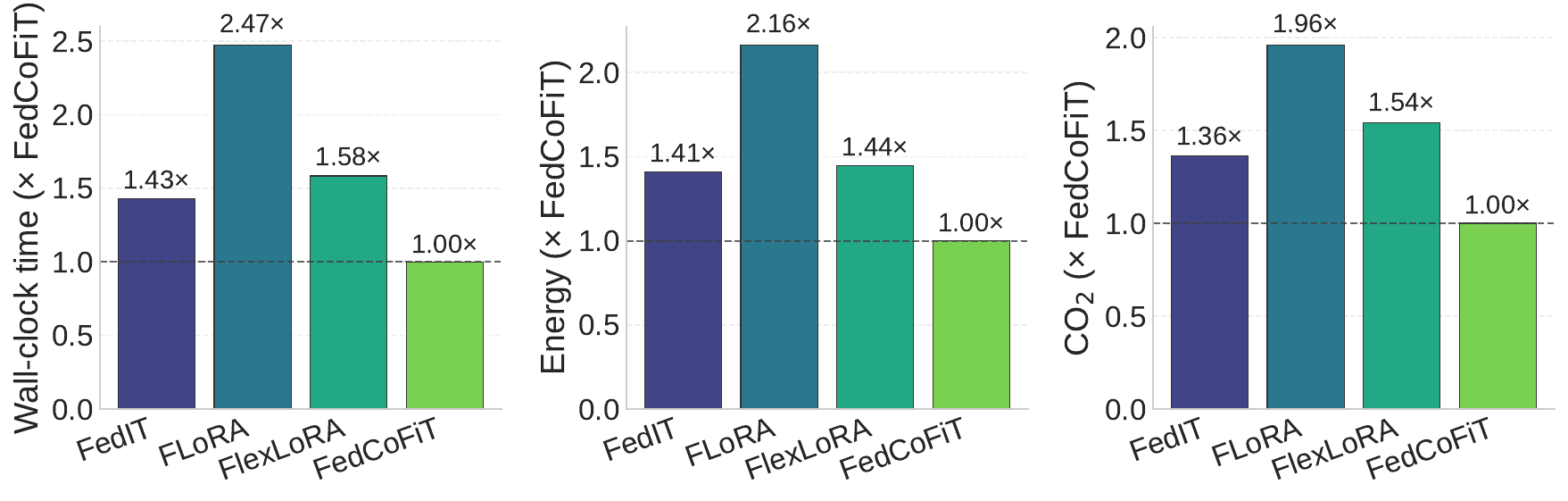}
    \caption{Relative efficiency comparison on Dolly-15K using  LLaMA-7B}
    \label{eff-tinyllama}
\end{figure*}

\section{Discussion and Conclusion}
\label{sec:discussion}
This work introduces Semantic Consensus as a fundamental principle for federated fine-tuning of large language models, in which collaboration is mediated through shared model behaviours rather than shared parameters. \fedcofit shows that clients can collaborate by exchanging responses to public prompts, allowing the server to form consensus pseudo-labels without sharing weights, gradients, or adapters. This directly reduces communication, supports heterogeneous architectures, and remains applicable when clients have different LoRA configurations.

The theoretical analysis shows that \fedcofit communication cost depends on the number of public prompts and response length, not model size, giving a fundamentally different scaling law from LoRA or parameter aggregation approaches. Furthermore, we theoretically show that \fedcofit converges with high probability under standard assumptions and identify the main factors that affect its convergence quality. Our privacy analysis shows that if local client fine-tuning is differentially private, then the released public-prompt responses and server-side consensus steps preserve record-level differential privacy through post-processing and composition.

Empirically, FedCoFiT matches or exceeds strong federated LoRA baselines across instruction following, knowledge generalization, and chat-quality tasks while using orders of magnitude less communication. It also remains effective under heterogeneous LoRA ranks, where parameter aggregation is difficult or unsupported. As shown in our Ablations, the main limitation of \fedcofit is dependence on the public prompt set and semantic clustering quality, as performance can degrade when prompts are too few or poorly aligned with the target task.
Overall, Semantic Consensus suggests that behavior-level sharing is a promising alternative to parameter aggregation for federated adaptation of large and heterogeneous LLMs. Future work should improve prompt selection, robustness under distribution shift, and extensions to behaviors beyond text generation.

\vfill
\pagebreak
\bibliography{bibliography}

@inproceedings{zhang2024towards,
  title={Towards building the federatedgpt: Federated instruction tuning},
  author={Zhang, Jianyi and Vahidian, Saeed and Kuo, Martin and Li, Chunyuan and Zhang, Ruiyi and Yu, Tong and Wang, Guoyin and Chen, Yiran},
  booktitle={ICASSP 2024-2024 IEEE International Conference on Acoustics, Speech and Signal Processing (ICASSP)},
  pages={6915--6919},
  year={2024},
  organization={IEEE}
}

@inproceedings{ma2023fedid,
  title={Fedid: Federated interactive distillation for large-scale pretraining language models},
  author={Ma, Xinge and Liu, Jiangming and Wang, Jin and Zhang, Xuejie},
  booktitle={Proceedings of the 2023 Conference on Empirical Methods in Natural Language Processing},
  pages={8566--8577},
  year={2023}
}

@article{hu2022lora,
  title={Lora: Low-rank adaptation of large language models.},
  author={Hu, Edward J and Shen, Yelong and Wallis, Phillip and Allen-Zhu, Zeyuan and Li, Yuanzhi and Wang, Shean and Wang, Lu and Chen, Weizhu and others},
  journal={ICLR},
  volume={1},
  number={2},
  pages={3},
  year={2022}
}

@inproceedings{mcmahan2017communication,
  title={Communication-efficient learning of deep networks from decentralized data},
  author={McMahan, Brendan and Moore, Eider and Ramage, Daniel and Hampson, Seth and y Arcas, Blaise Aguera},
  booktitle={Artificial intelligence and statistics},
  pages={1273--1282},
  year={2017},
  organization={PMLR}
}

@article{wang2024flora,
  title={Flora: Federated fine-tuning large language models with heterogeneous low-rank adaptations},
  author={Wang, Ziyao and Shen, Zheyu and He, Yexiao and Sun, Guoheng and Wang, Hongyi and Lyu, Lingjuan and Li, Ang},
  journal={Advances in Neural Information Processing Systems},
  volume={37},
  pages={22513--22533},
  year={2024}
}

@article{bai2024federated,
  title={Federated fine-tuning of large language models under heterogeneous tasks and client resources},
  author={Bai, Jiamu and Chen, Daoyuan and Qian, Bingchen and Yao, Liuyi and Li, Yaliang},
  journal={Advances in Neural Information Processing Systems},
  volume={37},
  pages={14457--14483},
  year={2024}
}

@article{cho2024heterogeneous,
  title={Heterogeneous lora for federated fine-tuning of on-device foundation models},
  author={Cho, Yae Jee and Liu, Luyang and Xu, Zheng and Fahrezi, Aldi and Joshi, Gauri},
  journal={arXiv preprint arXiv:2401.06432},
  year={2024}
}

@inproceedings{bian2025lora,
  title={Lora-fair: Federated lora fine-tuning with aggregation and initialization refinement},
  author={Bian, Jieming and Wang, Lei and Zhang, Letian and Xu, Jie},
  booktitle={Proceedings of the IEEE/CVF International Conference on Computer Vision},
  pages={3737--3746},
  year={2025}
}

@inproceedings{koo2025towards,
  title={Towards robust and efficient federated low-rank adaptation with heterogeneous clients},
  author={Koo, Jabin and Jang, Minwoo and Ok, Jungseul},
  booktitle={Proceedings of the 63rd Annual Meeting of the Association for Computational Linguistics (Volume 1: Long Papers)},
  pages={416--429},
  year={2025}
}

@article{sun2023fedbpt,
  title={Fedbpt: Efficient federated black-box prompt tuning for large language models},
  author={Sun, Jingwei and Xu, Ziyue and Yin, Hongxu and Yang, Dong and Xu, Daguang and Chen, Yiran and Roth, Holger R},
  journal={arXiv preprint arXiv:2310.01467},
  year={2023}
}

@misc{taori2023stanford,
  title={Stanford alpaca: An instruction-following llama model},
  author={Taori, Rohan and Gulrajani, Ishaan and Zhang, Tianyi and Dubois, Yann and Li, Xuechen and Guestrin, Carlos and Liang, Percy and Hashimoto, Tatsunori B},
  year={2023},
  publisher={Stanford, CA, USA}
}

@article{zhou2023instruction,
  title={Instruction-following evaluation for large language models},
  author={Zhou, Jeffrey and Lu, Tianjian and Mishra, Swaroop and Brahma, Siddhartha and Basu, Sujoy and Luan, Yi and Zhou, Denny and Hou, Le},
  journal={arXiv preprint arXiv:2311.07911},
  year={2023}
}

@article{luo2023wizardmath,
  title={Wizardmath: Empowering mathematical reasoning for large language models via reinforced evol-instruct},
  author={Luo, Haipeng and Sun, Qingfeng and Xu, Can and Zhao, Pu and Lou, Jianguang and Tao, Chongyang and Geng, Xiubo and Lin, Qingwei and Chen, Shifeng and Zhang, Dongmei},
  journal={arXiv preprint arXiv:2308.09583},
  year={2023}
}

@misc{OpenOrca,
  title = {OpenOrca: An Open Dataset of GPT Augmented FLAN Reasoning Traces},
  author = {Wing Lian and Bleys Goodson and Eugene Pentland and Austin Cook and Chanvichet Vong and "Teknium"},
  year = {2023},
  publisher = {HuggingFace},
  journal = {HuggingFace repository},
  howpublished = {\url{https://https://huggingface.co/datasets/Open-Orca/OpenOrca}},
}

@article{hendrycks2020measuring,
  title={Measuring massive multitask language understanding},
  author={Hendrycks, Dan and Burns, Collin and Basart, Steven and Zou, Andy and Mazeika, Mantas and Song, Dawn and Steinhardt, Jacob},
  journal={arXiv preprint arXiv:2009.03300},
  year={2020}
}

@article{chiang2023vicuna,
  title={Vicuna: An open-source chatbot impressing gpt-4 with 90\%* chatgpt quality},
  author={Chiang, Wei-Lin and Li, Zhuohan and Lin, Ziqing and Sheng, Ying and Wu, Zhanghao and Zhang, Hao and Zheng, Lianmin and Zhuang, Siyuan and Zhuang, Yonghao and Gonzalez, Joseph E and others},
  journal={See https://vicuna. lmsys. org (accessed 14 April 2023)},
  volume={2},
  number={3},
  pages={6},
  year={2023}
}

@inproceedings{kamp2018efficient,
  title={Efficient decentralized deep learning by dynamic model averaging},
  author={Kamp, Michael and Adilova, Linara and Sicking, Joachim and H{\"u}ger, Fabian and Schlicht, Peter and Wirtz, Tim and Wrobel, Stefan},
  booktitle={Joint European conference on machine learning and knowledge discovery in databases},
  pages={393--409},
  year={2018},
  organization={Springer}
}

@inproceedings{kamp2016communication,
  title={Communication-efficient distributed online learning with kernels},
  author={Kamp, Michael and Bothe, Sebastian and Boley, Mario and Mock, Michael},
  booktitle={ECMLPKDD},
  pages={805--819},
  year={2016},
  organization={Springer}
}

@phdthesis{kamp2019black,
  author       = {Michael Kamp}, 
  title        = {Black-Box Parallelization for Machine Learning},
  school       = {Rheinische Friedrich-Wilhelms-Universit{\"a}t Bonn},
  year         = 2019,
  address      = {Universit{\"a}ts-und Landesbibliothek Bonn},
}

@article{ding2023enhancing,
  title={Enhancing Chat Language Models by Scaling High-quality Instructional Conversations},
  author={Ding, Ning and Chen, Yulin and Xu, Bokai and Qin, Yujia and Zheng, Zhi and Hu, Shengding and Liu, Zhiyuan and Sun, Maosong and Zhou, Bowen},
  journal={arXiv preprint arXiv:2305.14233},
  year={2023}
}

@article{zheng2023judging,
  title={Judging llm-as-a-judge with mt-bench and chatbot arena},
  author={Zheng, Lianmin and Chiang, Wei-Lin and Sheng, Ying and Zhuang, Siyuan and Wu, Zhanghao and Zhuang, Yonghao and Lin, Zi and Li, Zhuohan and Li, Dacheng and Xing, Eric and others},
  journal={Advances in neural information processing systems},
  volume={36},
  pages={46595--46623},
  year={2023}
}

@misc{ai_energy_validation,
  title  = {Ground-Truthing {AI} Energy Consumption: {Validating} {CodeCarbon} Against External Measurements}, 
  author = {Raphael Fischer},
  year   = {2025},
  doi    = {10.48550/arXiv.2509.22092},
  url    = {https://arxiv.org/abs/2509.22092}, 
}

@article{ghadimi2013stochastic,
  title   = {Stochastic First- and Zeroth-Order Methods for Nonconvex Stochastic Programming},
  author  = {Ghadimi, Saeed and Lan, Guanghui},
  journal = {SIAM Journal on Optimization},
  volume  = {23},
  number  = {4},
  pages   = {2341--2368},
  year    = {2013},
  doi     = {10.1137/120880811},
  eprint  = {1309.5549},
  archivePrefix = {arXiv},
  primaryClass  = {math.OC}
}

@inproceedings{karimireddy2020scaffold,
  title     = {SCAFFOLD: Stochastic Controlled Averaging for Federated Learning},
  author    = {Karimireddy, Sai Praneeth and Kale, Satyen and Mohri, Mehryar and
               Reddi, Sashank J. and Stich, Sebastian U. and Suresh, Ananda Theertha},
  booktitle = {Proceedings of the 37th International Conference on Machine Learning},
  series    = {Proceedings of Machine Learning Research},
  volume    = {119},
  pages     = {5132--5143},
  year      = {2020},
  publisher = {PMLR},
  url       = {https://proceedings.mlr.press/v119/karimireddy20a.html}
}

@inproceedings{li2020fedavg,
  title     = {On the Convergence of FedAvg on Non-IID Data},
  author    = {Li, Xiang and Huang, Kaixuan and Yang, Wenhao and Wang, Shusen and Zhang, Zhihua},
  booktitle = {International Conference on Learning Representations},
  year      = {2020},
  url       = {https://openreview.net/forum?id=HJxNAnVtDS},
  eprint    = {1907.02189},
  archivePrefix = {arXiv},
  primaryClass  = {cs.LG}
}

@inproceedings{abadi2016deep,
  title     = {Deep Learning with Differential Privacy},
  author    = {Abadi, Mart{\'i}n and Chu, Andy and Goodfellow, Ian and McMahan, H. Brendan and
               Mironov, Ilya and Talwar, Kunal and Zhang, Li},
  booktitle = {Proceedings of the 2016 ACM SIGSAC Conference on Computer and Communications Security},
  pages     = {308--318},
  year      = {2016},
  doi       = {10.1145/2976749.2978318},
  eprint    = {1607.00133},
  archivePrefix = {arXiv},
  primaryClass  = {cs.LG}
}

@inproceedings{chen2020understanding,
  title     = {Understanding Gradient Clipping in Private SGD: A Geometric Perspective},
  author    = {Chen, Xiangyi and Wu, Zhiwei Steven and Hong, Mingyi},
  booktitle = {Advances in Neural Information Processing Systems},
  year      = {2020},
  url       = {https://proceedings.neurips.cc/paper/2020/hash/9ecff5455677b38d19f49ce658ef0608-Abstract.html},
  eprint    = {2006.15429},
  archivePrefix = {arXiv},
  primaryClass  = {cs.LG}
}

@article{li2018fedprox,
  title   = {Federated Optimization in Heterogeneous Networks},
  author  = {Li, Tian and Sahu, Anit Kumar and Zaheer, Manzil and Sanjabi, Maziar and Talwalkar, Ameet and Smith, Virginia},
  year    = {2018},
  eprint  = {1812.06127},
  archivePrefix = {arXiv},
  primaryClass  = {cs.LG},
  note    = {Also appeared in Proceedings of Machine Learning and Systems (MLSys) 2020.}
}

@article{sugiyama2007covariate,
  title   = {Covariate Shift Adaptation by Importance Weighted Cross Validation},
  author  = {Sugiyama, Masashi and Krauledat, Matthias and M{\"u}ller, Klaus-Robert},
  journal = {Journal of Machine Learning Research},
  volume  = {8},
  number  = {35},
  pages   = {985--1005},
  year    = {2007},
  url     = {https://www.jmlr.org/papers/v8/sugiyama07a.html}
}

@inproceedings{aminian2022information,
  title     = {An Information-theoretical Approach to Semi-supervised Learning under Covariate-shift},
  author    = {Aminian, Gholamali and Abroshan, Mahed and Khalili, Mohammad Mahdi and Toni, Laura and Rodrigues, Miguel},
  booktitle = {Proceedings of The 25th International Conference on Artificial Intelligence and Statistics},
  series    = {Proceedings of Machine Learning Research},
  volume    = {151},
  pages     = {7433--7449},
  year      = {2022},
  publisher = {PMLR},
  url       = {https://proceedings.mlr.press/v151/aminian22a.html},
  eprint    = {2202.12123},
  archivePrefix = {arXiv},
  primaryClass  = {cs.LG}
}

@inproceedings{natarajan2013learning,
  title     = {Learning with Noisy Labels},
  author    = {Natarajan, Nagarajan and Dhillon, Inderjit S. and Ravikumar, Pradeep K. and Tewari, Ambuj},
  booktitle = {Advances in Neural Information Processing Systems},
  pages     = {1196--1204},
  year      = {2013},
  url       = {https://papers.nips.cc/paper/5073-learning-with-noisy-labels}
}

@inproceedings{wei2020theoretical,
  title     = {Theoretical Analysis of Self-Training with Deep Networks on Unlabeled Data},
  author    = {Wei, Colin and Shen, Kendrick and Chen, Yining and Ma, Tengyu},
  booktitle = {International Conference on Learning Representations},
  year      = {2021},
  url       = {https://openreview.net/forum?id=rC8sJ4i6kaH},
  eprint    = {2010.03622},
  archivePrefix = {arXiv},
  primaryClass  = {cs.LG}
}

@article{liu2024differentially,
  title={Differentially private parameter-efficient fine-tuning for large asr models},
  author={Liu, Hongbin and Wang, Lun and Thakkar, Om and Thakurta, Abhradeep and Narayanan, Arun},
  journal={arXiv preprint arXiv:2410.01948},
  year={2024}
}

@article{zhang2026heterogeneous,
  title={Heterogeneous Federated Fine-Tuning with Parallel One-Rank Adaptation},
  author={Zhang, Zikai and Hu, Rui and Xu, Jiahao},
  journal={arXiv preprint arXiv:2602.16936},
  year={2026}
}

@article{huang2026stabilized,
  title={Stabilized Fine-Tuning with LoRA in Federated Learning: Mitigating the Side Effect of Client Size and Rank via the Scaling Factor},
  author={Huang, Jiayu and Wu, Xiaohu and He, Tiantian and Lao, Qicheng},
  journal={arXiv preprint arXiv:2603.08058},
  year={2026}
}

@inproceedings{abourayya2025little,
  title={Little is enough: Boosting privacy by sharing only hard labels in federated semi-supervised learning},
  author={Abourayya, Amr and Kleesiek, Jens and Rao, Kanishka and Ayday, Erman and Rao, Bharat and Webb, Geoffrey I and Kamp, Michael},
  booktitle={Proceedings of the AAAI Conference on Artificial Intelligence},
  volume={39},
  number={15},
  pages={15293--15301},
  year={2025}
}

@article{bistritz2020distributed,
  title={Distributed distillation for on-device learning},
  author={Bistritz, Ilai and Mann, Ariana and Bambos, Nicholas},
  journal={Advances in Neural Information Processing Systems},
  volume={33},
  pages={22593--22604},
  year={2020}
}
\bibliographystyle{plainnat}
\vfill

\appendix
\pagebreak
\onecolumn
\section{Convergence Proofs and Supporting Lemmas}
\label{app:convergence_full}

\subsection{Proof of Lemma~\ref{lem:sur_ref_gap_noniid}}
\label{app:proof_sur_ref_gap_noniid}

Fix a parameter $\theta\in\mathbb{R}^{d_\theta}$ and a communication round $q$. Define the population risks using the global reference distribution $\mathcal{P}$:
\[
R_{\mathcal{P}}(\theta):=\mathbb{E}_{x\sim \mathcal{P}_X}\mathbb{E}_{y\sim \mathcal{P}(\cdot\mid x)}[\ell(\theta;x,y)] \quad (=G(\theta)),
\]
\[
R_Q^{(q)}(\theta):=\mathbb{E}_{x\sim Q_X^{(q)}}\mathbb{E}_{y\sim \mathcal{P}(\cdot\mid x)}[\ell(\theta;x,y)],
\qquad
\widetilde R_Q^{(q)}(\theta):=\mathbb{E}_{x\sim Q_X^{(q)}}\mathbb{E}_{\tilde{Y}\sim \widetilde{\mathcal{P}}^{(q)}(\cdot\mid x)}[\ell(\theta;x,\tilde{Y})].
\]
For any $\theta$ and $x$, define $v_\theta(x):=\mathbb{E}_{y\sim \mathcal{P}(\cdot\mid x)}[\nabla_\theta \ell(\theta;x,y)]$ and $\widetilde v_\theta^{(q)}(x):=\mathbb{E}_{\tilde{Y}\sim \widetilde{\mathcal{P}}^{(q)}(\cdot\mid x)}[\nabla_\theta \ell(\theta;x,\tilde{Y})]$. By Assumption~\ref{ass:shift_noise_noniid}(2), $\|v_\theta(x)\|\le G_{\mathrm{grad}}$ and $\|\widetilde v_\theta^{(q)}(x)\|\le G_{\mathrm{grad}}$.

\paragraph{A. Covariate shift bound.}

\begin{lemma}[Pinsker]
\label{lem:pinsker_app_noniid}
For distributions $\mu,\nu$ on the same space, $\mathrm{TV}(\mu,\nu)\le \sqrt{\frac12\,\mathrm{KL}(\mu\|\nu)}$.
\end{lemma}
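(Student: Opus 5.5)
We would prove Lemma~\ref{lem:pinsker_app_noniid} by the standard two-step route: first reduce to distributions on a two-point space using the data-processing (log-sum) inequality, then verify the resulting scalar inequality by elementary calculus. Throughout, $\mathrm{KL}$ uses natural logarithms. If $\mu$ is not absolutely continuous with respect to $\nu$, then $\mathrm{KL}(\mu\|\nu)=\infty$ and there is nothing to prove. So we assume $\mu\ll\nu$ and work with densities $p,q$ with respect to a common dominating measure (e.g.\ $\mu+\nu$).

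\textbf{Step 1 (reduction to Bernoulli).} Let $A=\{p\ge q\}$. Then $\mathrm{TV}(\mu,\nu)=\mu(A)-\nu(A)$. Set $a=\mu(A)$ and $b=\nu(A)$. Consider the deterministic map $x\mapsto \mathbf{1}\{x\in A\}$. It pushes $\mu,\nu$ forward to $\mathrm{Bern}(a)$ and $\mathrm{Bern}(b)$. By the data-processing inequality for $\mathrm{KL}$, equivalently the log-sum inequality applied separately on $A$ and on $A^c$,
\[
\mathrm{KL}(\mu\|\nu)\ \ge\ \mathrm{kl}(a\|b):=a\log\frac{a}{b}+(1-a)\log\frac{1-a}{1-b}.
\]
It therefore suffices to show $\mathrm{kl}(a\|b)\ge 2(a-b)^2$. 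Squaring and rearranging $\mathrm{TV}\le\sqrt{\tfrac12\mathrm{KL}}$ gives exactly this requirement, namely $\mathrm{KL}\ge 2\,\mathrm{TV}^2$.

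\textbf{Step 2 (binary inequality).} Fix $a\in[0,1]$ and define $f(b)=\mathrm{kl}(a\|b)-2(a-b)^2$ for $b\in(0,1)$. Then $f(a)=0$. Differentiating gives
\[
f'(b)=-\frac{a}{b}+\frac{1-a}{1-b}+4(a-b)=(b-a)\Big(\frac{1}{b(1-b)}-4\Big).
\]
Since $b(1-b)\le \tfrac14$, the second factor is nonnegative. Hence $f'$ has the sign of $b-a$, so $f$ is minimized at $b=a$, and therefore $f\ge 0$.

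\textbf{Main obstacle.} The only delicate points are the boundary cases $b\in\{0,1\}$ and $a\in\{0,1\}$. If $b\in\{0,1\}$ with $a\ne b$, the divergence is infinite and the inequality holds trivially. If $a\in\{0,1\}$, the convention $0\log 0=0$ applies and $f$ remains continuous on $(0,1)$, so Step 2 goes through unchanged. The justification of the data-processing step in Step 1 is routine via the log-sum inequality. Combining Steps 1 and 2 yields $\mathrm{TV}(\mu,\nu)\le\sqrt{\tfrac12\mathrm{KL}(\mu\|\nu)}$.
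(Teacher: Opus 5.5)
Your proof is correct. The paper does not prove this lemma; it states it as the standard Pinsker inequality and uses it only as a black box in the covariate-shift bound (Lemma~\ref{lem:cov_shift_grad_app_noniid}). So your argument gives a self-contained derivation where the paper relies on the literature. Your route is the classical textbook proof: push $\mu,\nu$ forward through the indicator of $A=\{p\ge q\}$, use data processing (log-sum) to get $\mathrm{KL}(\mu\|\nu)\ge \mathrm{kl}(a\|b)$, and verify $\mathrm{kl}(a\|b)\ge 2(a-b)^2$ via $f'(b)=(b-a)\bigl(\frac{1}{b(1-b)}-4\bigr)$. Your derivative computation is right.

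Your proof also makes explicit something the paper leaves implicit: the normalization of $\mathrm{TV}$. Identifying $\mathrm{TV}(\mu,\nu)=\mu(A)-\nu(A)=\sup_B|\mu(B)-\nu(B)|$, i.e.\ half the $L^1$ distance, is exactly what makes the constant $\frac{1}{2}$ correct. It is also the convention under which the paper's Lemma~\ref{lem:tv_expect_app_noniid} carries the factor $2B$; with the unnormalized $L^1$ distance the constant would change.

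One small wording fix concerns the case $a\in\{0,1\}$. There the minimizer $b=a$ does not lie in $(0,1)$, so Step~2 does not go through literally unchanged. Say instead that $f$ is monotone on $(0,1)$ (increasing if $a=0$, decreasing if $a=1$) and that $f(b)\to 0$ as $b\to a$, which yields $f\ge 0$.
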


\begin{lemma}[Expectation shift under total variation]
\label{lem:tv_expect_app_noniid}
Let $h:\mathcal{X}\to\mathbb{R}^{d_\theta}$ be measurable and satisfy $\|h(x)\|\le B$ for all $x$. Then $\|\mathbb{E}_{x\sim Q}[h(x)]-\mathbb{E}_{x\sim P}[h(x)]\|\le 2B\,\mathrm{TV}(P,Q)$.
\end{lemma}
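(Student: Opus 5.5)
The plan is to prove the statement by the variational (dual-norm) characterization of the Euclidean norm combined with the usual coupling characterization of total variation. We may assume $P$ and $Q$ have densities $p,q$ with respect to a common dominating measure $\mu$, for instance $\mu=(P+Q)/2$. Then
\[
\mathbb{E}_{x\sim Q}[h(x)]-\mathbb{E}_{x\sim P}[h(x)]=\int h(x)\,\big(q(x)-p(x)\big)\,d\mu(x),
\]
which is a Bochner integral of a vector-valued function.

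The first step is to reduce to scalars. For any unit vector $u\in\mathbb{R}^{d_\theta}$, the function $x\mapsto u^\top h(x)$ is measurable and bounded in absolute value by $B$, using Cauchy--Schwarz and $\|h(x)\|\le B$. The second step bounds the scalar difference:
\[
\Big|\int u^\top h\,(q-p)\,d\mu\Big|\le \int |u^\top h|\,|q-p|\,d\mu \le B\int|q-p|\,d\mu = 2B\,\mathrm{TV}(P,Q),
\]
using the convention $\mathrm{TV}(P,Q)=\tfrac12\int|p-q|\,d\mu=\sup_A|P(A)-Q(A)|$. The third step recovers the vector norm. We take $u$ to be the unit vector in the direction of the difference; if the difference is zero there is nothing to prove. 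This gives $\|\mathbb{E}_Q h-\mathbb{E}_P h\|=u^\top(\mathbb{E}_Q h-\mathbb{E}_P h)\le 2B\,\mathrm{TV}(P,Q)$.

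An alternative route, which is equally short, uses a maximal coupling $(X,X')$ with $X\sim P$, $X'\sim Q$ and $\Pr(X\neq X')=\mathrm{TV}(P,Q)$. By Jensen applied to the norm, $\|\mathbb{E}[h(X')-h(X)]\|\le\mathbb{E}[\|h(X')-h(X)\|\mathbf{1}\{X\neq X'\}]\le 2B\,\mathrm{TV}(P,Q)$.

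The only delicate point is bookkeeping rather than substance. First, we must fix the TV normalization consistently with Pinsker's inequality as stated in Lemma~\ref{lem:pinsker_app_noniid}, which uses the half-$L^1$ convention; this is what produces the factor $2B$ rather than $B$. Second, interchanging the inner product with the expectation requires integrability, which is immediate from boundedness. No earlier results are needed beyond the definition of $\mathrm{TV}$.
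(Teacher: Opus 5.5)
Your proof is correct and complete, whether by the dual-norm bound with densities relative to $(P+Q)/2$ or by the maximal-coupling argument. The existence of those densities is automatic, so nothing is being assumed there. The paper gives no proof of this lemma: it states it as a standard fact and invokes it only inside the covariate-shift and pseudo-label-noise lemmas, so there is no competing route to compare against. Your half-$L^1$ convention for $\mathrm{TV}$ matches the paper's form of Pinsker, and that matching is what produces the downstream constant $G_{\mathrm{grad}}\sqrt{2\tau}$.
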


\begin{lemma}[Covariate shift gradient bound]
\label{lem:cov_shift_grad_app_noniid}
If $\mathrm{KL}(\mathcal{P}_X\|Q_X^{(q)})\le\tau_q$ and $\|v_\theta(x)\|\le G_{\mathrm{grad}}$, then
\[
\|\nabla R_Q^{(q)}(\theta)-\nabla R_{\mathcal{P}}(\theta)\|
\le
G_{\mathrm{grad}}\sqrt{2\tau_q}
\le
G_{\mathrm{grad}}\sqrt{2\tau}.
\]
\end{lemma}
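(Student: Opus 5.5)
The plan is to write both gradients as expectations over the prompt marginal of the same bounded vector field, and then apply Lemma~\ref{lem:tv_expect_app_noniid} followed by Pinsker (Lemma~\ref{lem:pinsker_app_noniid}).

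First, I justify exchanging gradient and expectation. By Assumption~\ref{ass:shift_noise_noniid}(ii), $\|\nabla_\theta\ell(\theta;x,y)\|\le G_{\mathrm{grad}}$ uniformly, so dominated convergence lets us differentiate under the integral. This gives
\[
\nabla R_Q^{(q)}(\theta)=\mathbb{E}_{x\sim Q_X^{(q)}}[v_\theta(x)],
\qquad
\nabla R_{\mathcal{P}}(\theta)=\mathbb{E}_{x\sim \mathcal{P}_X}[v_\theta(x)].
\]
The key point is that the inner conditional $\mathcal{P}(\cdot\mid x)$ is the same in both risks; only the covariate marginal differs.

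Second, I apply Lemma~\ref{lem:tv_expect_app_noniid} with $h=v_\theta$ and $B=G_{\mathrm{grad}}$. Here $v_\theta$ is measurable in $x$ as a conditional expectation of a measurable bounded function, and $\|v_\theta(x)\|\le G_{\mathrm{grad}}$ by Jensen. This yields
\[
\|\nabla R_Q^{(q)}(\theta)-\nabla R_{\mathcal{P}}(\theta)\|\le 2G_{\mathrm{grad}}\,\mathrm{TV}(\mathcal{P}_X,Q_X^{(q)}).
\]

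Third, I invoke Pinsker with $\mu=\mathcal{P}_X$ and $\nu=Q_X^{(q)}$, which gives $\mathrm{TV}\le\sqrt{\tau_q/2}$. Therefore the bound becomes
\[
2G_{\mathrm{grad}}\sqrt{\tau_q/2}=G_{\mathrm{grad}}\sqrt{2\tau_q},
\]
and this is at most $G_{\mathrm{grad}}\sqrt{2\tau}$ because $\tau_q\le\tau$ and the square root is monotone.

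The only step needing care is the first: interchanging gradient and expectation, and checking that $v_\theta$ is a well-defined bounded measurable map. Both follow from the uniform gradient bound, assuming $\ell$ is differentiable in $\theta$. The KL direction also matters: Pinsker is symmetric in TV, so $\mathrm{KL}(\mathcal{P}_X\|Q_X^{(q)})$ suffices as given.
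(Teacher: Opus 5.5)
Your proposal is correct and follows the paper's proof essentially verbatim: you write both gradients as expectations of $v_\theta$ under $Q_X^{(q)}$ and $\mathcal{P}_X$, apply Lemma~\ref{lem:tv_expect_app_noniid} with $B=G_{\mathrm{grad}}$, and then use Pinsker to get $2G_{\mathrm{grad}}\sqrt{\tau_q/2}=G_{\mathrm{grad}}\sqrt{2\tau_q}\le G_{\mathrm{grad}}\sqrt{2\tau}$. Your extra justification of swapping gradient and expectation via the uniform gradient bound is sound; the paper leaves that step implicit.
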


\begin{proof}
Since $\nabla R_Q^{(q)}(\theta)=\mathbb{E}_{Q_X^{(q)}}[v_\theta(x)]$ and $\nabla R_{\mathcal{P}}(\theta)=\mathbb{E}_{\mathcal{P}_X}[v_\theta(x)]$, Lemma~\ref{lem:tv_expect_app_noniid} with $h=v_\theta$ and $B=G_{\mathrm{grad}}$ gives $\|\nabla R_Q^{(q)}(\theta)-\nabla R_{\mathcal{P}}(\theta)\|\le2G_{\mathrm{grad}}\,\mathrm{TV}(\mathcal{P}_X,Q_X^{(q)})$. Pinsker and $\mathrm{KL}(\mathcal{P}_X\|Q_X^{(q)})\le\tau_q$ imply $\mathrm{TV}(\mathcal{P}_X,Q_X^{(q)})\le \sqrt{\tau_q/2}$, yielding the claim.
\end{proof}

\paragraph{B. Pseudo-label noise bound.}

\begin{lemma}[Pseudo-label noise gradient bound]
\label{lem:label_noise_grad_app_noniid}
Under Assumption~\ref{ass:shift_noise_noniid}, for all $\theta$ and each communication round $q$,
\[
\|\nabla \widetilde R_Q^{(q)}(\theta)-\nabla R_Q^{(q)}(\theta)\|
\le
2G_{\mathrm{grad}}\eta_q
\le
2G_{\mathrm{grad}}\eta_{\mathrm{pl}}.
\]
\end{lemma}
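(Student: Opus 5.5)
The plan is to reduce the claim to a pointwise-in-$x$ comparison and then average over $Q_X^{(q)}$, mirroring the covariate-shift argument of Lemma~\ref{lem:cov_shift_grad_app_noniid}, except that now the prompt marginal is fixed and only the conditional label law changes.

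First, since $\|\nabla_\theta\ell\|\le G_{\mathrm{grad}}$ is uniformly bounded by Assumption~\ref{ass:shift_noise_noniid}(ii), dominated convergence lets us differentiate under the expectations:
\[
\nabla \widetilde R_Q^{(q)}(\theta)=\mathbb{E}_{x\sim Q_X^{(q)}}\big[\widetilde v_\theta^{(q)}(x)\big],
\qquad
\nabla R_Q^{(q)}(\theta)=\mathbb{E}_{x\sim Q_X^{(q)}}\big[v_\theta(x)\big].
\]
Both risks share the same outer prompt distribution $Q_X^{(q)}$. Jensen's inequality for the norm then gives
\[
\|\nabla \widetilde R_Q^{(q)}(\theta)-\nabla R_Q^{(q)}(\theta)\|\le \mathbb{E}_{x\sim Q_X^{(q)}}\big\|\widetilde v_\theta^{(q)}(x)-v_\theta(x)\big\|.
\]

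Second, fix $x$ and apply Lemma~\ref{lem:tv_expect_app_noniid} on the label space. Take $h(y)=\nabla_\theta\ell(\theta;x,y)$ with bound $B=G_{\mathrm{grad}}$, and compare the two distributions $\mathcal{P}(\cdot\mid x)$ and $\widetilde{\mathcal{P}}^{(q)}(\cdot\mid x)$. This yields
\[
\|\widetilde v_\theta^{(q)}(x)-v_\theta(x)\|\le 2G_{\mathrm{grad}}\,\mathrm{TV}\big(\mathcal{P}(\cdot\mid x),\widetilde{\mathcal{P}}^{(q)}(\cdot\mid x)\big).
\]
One point needs care: the label space $\mathcal{Y}$ here is a space of responses rather than prompts. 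Lemma~\ref{lem:tv_expect_app_noniid} is stated for a generic measurable bounded $h$, and its proof (via the coupling or the signed-measure bound $|\int h\,d(\mu-\nu)|\le B\|\mu-\nu\|_1 = 2B\,\mathrm{TV}$) does not depend on the domain. It therefore transfers verbatim.

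Third, integrate this pointwise bound over $x\sim Q_X^{(q)}$. By definition of $\eta_q$ in Assumption~\ref{ass:shift_noise_noniid}(iii), this gives $2G_{\mathrm{grad}}\eta_q$, and $\eta_q\le\eta_{\mathrm{pl}}$ gives the final inequality.

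The only mildly delicate step is measurability. The map $x\mapsto \mathrm{TV}(\mathcal{P}(\cdot\mid x),\widetilde{\mathcal{P}}^{(q)}(\cdot\mid x))$, along with the interchange of gradient and expectation, must be well defined. I would handle this by assuming regular conditional distributions, i.e.\ Markov kernels, which is implicit in the definition of $\eta_q$. The bounded gradient then justifies differentiating under the integral.
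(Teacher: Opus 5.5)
Your proposal is correct and follows the paper's proof. Both apply the total-variation expectation bound (Lemma~\ref{lem:tv_expect_app_noniid}) pointwise in $x$ on the label space, with $h=\nabla_\theta\ell(\theta;x,\cdot)$ and $B=G_{\mathrm{grad}}$, and then average over $x\sim Q_X^{(q)}$ to get $2G_{\mathrm{grad}}\eta_q\le 2G_{\mathrm{grad}}\eta_{\mathrm{pl}}$. Your explicit Jensen step and your remarks on differentiating under the integral and on regular conditional kernels make rigorous what the paper leaves implicit.
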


\begin{proof}
For fixed $\theta$ and $x$, $\widetilde v_\theta^{(q)}(x)-v_\theta(x)=\mathbb{E}_{\tilde{Y}\sim\widetilde{\mathcal{P}}^{(q)}(\cdot\mid x)}[\nabla_\theta\ell(\theta;x,\tilde{Y})]-\mathbb{E}_{y\sim\mathcal{P}(\cdot\mid x)}[\nabla_\theta\ell(\theta;x,y)]$. Since $\|\nabla_\theta\ell(\theta;x,y)\|\le G_{\mathrm{grad}}$, the total-variation expectation bound gives $\|\widetilde v_\theta^{(q)}(x)-v_\theta(x)\|\le2G_{\mathrm{grad}}\mathrm{TV}(\mathcal{P}(\cdot\mid x),\widetilde{\mathcal{P}}^{(q)}(\cdot\mid x))$. Taking $\mathbb{E}_{x\sim Q_X^{(q)}}$ yields the result.
\end{proof}

\paragraph{C. Finite-sample concentration for the empirical public gradient.}

At a public-gradient evaluation using communication round $q$, let $\{(x_k,\tilde{Y}_k)\}_{k=1}^{B_{\mathrm{pub}}}$ be i.i.d. with $x_k\sim Q_X^{(q)}$ and $\tilde{Y}_k\sim\widetilde{\mathcal{P}}^{(q)}(\cdot\mid x_k)$. Define $\widehat g^{(q)}(\theta):=B_{\mathrm{pub}}^{-1}\sum_{k=1}^{B_{\mathrm{pub}}}\nabla_\theta \ell(\theta;x_k,\tilde{Y}_k)$, so $\mathbb{E}[\widehat g^{(q)}(\theta)]=\nabla \widetilde R_Q^{(q)}(\theta)$.

\begin{lemma}[Vector Hoeffding via a $1/2$-net]
\label{lem:net_hoeff_app_noniid}
Fix $\theta$ and $\rho\in(0,1)$. If $\|\nabla_\theta \ell(\theta;x,y)\|\le G_{\mathrm{grad}}$, then with probability at least $1-\rho$,
\[
\|\widehat g^{(q)}(\theta)-\nabla \widetilde R_Q^{(q)}(\theta)\|
\le
4G_{\mathrm{grad}}
\sqrt{
\frac{2}{B_{\mathrm{pub}}}
\left(
d_\theta\log 5+\log\frac{2}{\rho}
\right)
}.
\]
\end{lemma}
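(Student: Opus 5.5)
The approach is a standard $\epsilon$-net argument: reduce the vector deviation to finitely many scalar deviations along the directions of a net on the unit sphere, apply scalar Hoeffding to each, and finish with a union bound. Write $Z_k:=\nabla_\theta\ell(\theta;x_k,\tilde{Y}_k)-\nabla\widetilde R_Q^{(q)}(\theta)$ for $k=1,\dots,B_{\mathrm{pub}}$. These are i.i.d., mean zero, and satisfy $\|Z_k\|\le 2G_{\mathrm{grad}}$, because both the per-sample gradient and its expectation have norm at most $G_{\mathrm{grad}}$. Let $\bar Z=B_{\mathrm{pub}}^{-1}\sum_k Z_k=\widehat g^{(q)}(\theta)-\nabla\widetilde R_Q^{(q)}(\theta)$. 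Since $\theta$ is fixed and the samples are drawn independently of it (Assumption~\ref{ass:shift_noise_noniid}(iv)), no uniformity over $\theta$ is needed.

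\textbf{Step 1 (net reduction).} Let $\mathcal{N}$ be a $1/2$-net of the unit sphere $S^{d_\theta-1}$ with $|\mathcal{N}|\le 5^{d_\theta}$, which exists by the volumetric argument giving $(1+2/\epsilon)^{d}$ with $\epsilon=1/2$. For any vector $v$, choose $u^\star$ with $\langle u^\star,v\rangle=\|v\|$ and $w\in\mathcal{N}$ with $\|u^\star-w\|\le 1/2$. Then $\|v\|\le \langle w,v\rangle+\tfrac12\|v\|$, so $\|v\|\le 2\max_{w\in\mathcal{N}}\langle w,v\rangle$.

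\textbf{Step 2 (scalar Hoeffding).} For fixed $w\in\mathcal{N}$, the scalars $\langle w,Z_k\rangle$ are i.i.d., mean zero, and lie in $[-2G_{\mathrm{grad}},2G_{\mathrm{grad}}]$. Hoeffding's inequality gives
\[
\Pr\big(|\langle w,\bar Z\rangle|\ge s\big)\le 2\exp\!\big(-B_{\mathrm{pub}}s^2/(8G_{\mathrm{grad}}^2)\big).
\]

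\textbf{Step 3 (union bound).} Taking a union bound over $\mathcal{N}$ and setting the failure probability equal to $\rho$ requires
\[
s=2G_{\mathrm{grad}}\sqrt{\tfrac{2}{B_{\mathrm{pub}}}\big(d_\theta\log 5+\log\tfrac{2}{\rho}\big)}.
\]
Combining this with Step 1 yields $\|\bar Z\|\le 2s$, which is exactly the claimed constant $4G_{\mathrm{grad}}$.

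The only real obstacle is constant bookkeeping. The range of $\langle w,Z_k\rangle$ has width $4G_{\mathrm{grad}}$ after centering, and this width is what produces the factor $8$ in the exponent and hence the $\sqrt{2/B_{\mathrm{pub}}}$. The factor $2$ from the $1/2$-net and the factor $2$ from the two-sided tail must also be tracked. I would present the one-sided bound on $\langle w,\bar Z\rangle$ and note that the stated $\log(2/\rho)$ leaves slack. Finally, Lemma~\ref{lem:sur_ref_gap_noniid} follows by applying this result at each step with $\rho/T$ and taking a union bound over the $T$ steps, which accounts for the $\log(2T/\rho)$ in $\Delta_T(\rho)$.
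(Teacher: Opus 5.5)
Your proposal is correct and follows the same route as the paper's proof: a $1/2$-net of size at most $5^{d_\theta}$, scalar Hoeffding along each net direction, a union bound over the net, and the lifting step $\|v\|\le 2\max_{w\in\mathcal{N}}\langle w,v\rangle$. The paper only sketches this argument, and your constant bookkeeping recovers the stated bound $4G_{\mathrm{grad}}\sqrt{\tfrac{2}{B_{\mathrm{pub}}}(d_\theta\log 5+\log\tfrac{2}{\rho})}$ exactly: the centered range of width $4G_{\mathrm{grad}}$ gives the factor $8$ in the exponent, and the net contributes the final factor $2$.
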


\begin{proof}
Construct a $1/2$-net $\mathcal{N}$ of $S^{d_\theta-1}$ with $|\mathcal{N}|\le 5^{d_\theta}$. For each $u\in\mathcal{N}$, apply scalar Hoeffding to $\langle u,\widehat g^{(q)}(\theta)-\nabla\widetilde R_Q^{(q)}(\theta)\rangle$ and union bound over $\mathcal{N}$. Lift from net control to full norm control by the standard approximation argument.
\end{proof}

\paragraph{D. Combine and union bound over $T$ steps.}

Fix step $t$ and write $q=q(t)$. By the triangle inequality,
\[
\begin{aligned}
\|\widehat g^{(q)}(\theta^t)-\nabla R_{\mathcal{P}}(\theta^t)\|
\le\;&
\|\widehat g^{(q)}(\theta^t)-\nabla \widetilde R_Q^{(q)}(\theta^t)\|
+
\|\nabla \widetilde R_Q^{(q)}(\theta^t)-\nabla R_Q^{(q)}(\theta^t)\|
\\
&+
\|\nabla R_Q^{(q)}(\theta^t)-\nabla R_{\mathcal{P}}(\theta^t)\|.
\end{aligned}
\]
Lemma~\ref{lem:net_hoeff_app_noniid} with $\rho_t$, Lemma~\ref{lem:label_noise_grad_app_noniid}, and Lemma~\ref{lem:cov_shift_grad_app_noniid} imply that with probability at least $1-\rho_t$,
\[
\|\widehat g^{(q)}(\theta^t)-\nabla R_{\mathcal{P}}(\theta^t)\|
\le
G_{\mathrm{grad}}\sqrt{2\tau}
+
2G_{\mathrm{grad}}\eta_{\mathrm{pl}}
+
4G_{\mathrm{grad}}
\sqrt{
\frac{2}{B_{\mathrm{pub}}}
\left(
d_\theta\log 5+\log\frac{2}{\rho_t}
\right)
}.
\]
Set $\rho_t=\rho/T$ and union bound over $t=0,\dots,T-1$ to obtain, with probability at least $1-\rho$, $\|\widehat g^{(q(t))}(\theta^t)-\nabla R_{\mathcal{P}}(\theta^t)\|\le\Delta_T(\rho)$ for all $t$. Identifying $\nabla_\theta G_t(\theta^t)$ with $\widehat g^{(q(t))}(\theta^t)$ and $\nabla_\theta G(\theta^t)$ with $\nabla R_{\mathcal{P}}(\theta^t)$ proves Lemma~\ref{lem:sur_ref_gap_noniid}.
\qed

\subsection{A Supporting Lemma: Heterogeneity Contributes Extra Variance}
\label{app:heterogeneity_variance_lemma}

\begin{lemma}[Client sampling variance bound]
\label{lem:client_var}
Fix $\theta\in\mathbb{R}^{d_\theta}$ and suppose a client index $I$ is sampled uniformly from $\{1,\dots,K\}$. Let $\widehat{\nabla} F_I(\theta)$ be a possibly minibatched unbiased estimator of $\nabla F_I(\theta)$ satisfying $\mathbb{E}[\widehat{\nabla} F_i(\theta)]=\nabla F_i(\theta)$ and $\mathbb{E}[\|\widehat{\nabla} F_i(\theta)-\nabla F_i(\theta)\|^2]\le \sigma_{\mathrm{priv}}^2$ for all $i$. Then $\widehat{\nabla} F_I(\theta)$ is unbiased for $\nabla F(\theta)=K^{-1}\sum_{i=1}^{K}\nabla F_i(\theta)$ and
\[
\mathbb{E}\big[\|\widehat{\nabla} F_I(\theta)-\nabla F(\theta)\|^2\big]
\le
\sigma_{\mathrm{priv}}^2
+
\frac{1}{K}\sum_{i=1}^{K}
\|\nabla F_i(\theta)-\nabla F(\theta)\|^2.
\]
In particular, under Assumption~\ref{ass:heterogeneity_noniid}, $\mathbb{E}[\|\widehat{\nabla} F_I(\theta)-\nabla F(\theta)\|^2]\le \sigma_{\mathrm{priv}}^2+\zeta^2$.
\end{lemma}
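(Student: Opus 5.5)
The plan is to decompose the estimator's error into a within-client sampling term and a between-client heterogeneity term, and show that the cross term between them vanishes by conditioning on the sampled client $I$. Throughout, write $\widehat{\nabla} F_I(\theta)-\nabla F(\theta) = A + B$ with
\[
A := \widehat{\nabla} F_I(\theta)-\nabla F_I(\theta), \qquad B := \nabla F_I(\theta)-\nabla F(\theta),
\]
where the randomness is the joint draw of the client index $I$ and the minibatch used by the estimator. I take the minibatch noise to be independent of $I$ given the client, which is the natural reading of the hypothesis that the moment bounds hold for every fixed $i$.

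For unbiasedness, I would use the tower rule. Conditioning on $I=i$ gives $\mathbb{E}[\widehat{\nabla} F_I(\theta)\mid I=i]=\nabla F_i(\theta)$. Averaging over the uniform distribution of $I$ then gives $K^{-1}\sum_{i}\nabla F_i(\theta)=\nabla F(\theta)$.

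For the variance, I would expand
\[
\mathbb{E}\|A+B\|^2=\mathbb{E}\|A\|^2+2\,\mathbb{E}\langle A,B\rangle+\mathbb{E}\|B\|^2 .
\]
The cross term is handled by conditioning on $I$: given $I=i$, the vector $B$ is deterministic, and $\mathbb{E}[A\mid I=i]=0$ by the per-client unbiasedness assumption. Hence $\mathbb{E}[\langle A,B\rangle\mid I]=0$, so the cross term vanishes.

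The first term is bounded by conditioning in the same way: $\mathbb{E}[\|A\|^2\mid I=i]\le\sigma_{\mathrm{priv}}^2$ for every $i$, so $\mathbb{E}\|A\|^2\le\sigma_{\mathrm{priv}}^2$. Since $I$ is uniform, the last term equals exactly $K^{-1}\sum_i\|\nabla F_i(\theta)-\nabla F(\theta)\|^2$. Adding the three pieces gives the displayed bound. Assumption~\ref{ass:heterogeneity_noniid} then bounds the heterogeneity average by $\zeta^2$, but only at $\theta=\theta^t$, so the ``in particular'' clause should be read along the iterates.

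The only step needing care is the vanishing cross term. It relies on the estimator's noise being conditionally mean-zero given the client index, i.e.\ the minibatch is drawn after and conditional on $I$. I would state this conditional structure explicitly; the rest is routine algebra.
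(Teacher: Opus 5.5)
Your proposal is correct and follows essentially the same argument as the paper, which first writes the error as a uniform average over fixed clients $i$, then adds and subtracts $\nabla F_i(\theta)$ and kills the cross term by per-client unbiasedness; this is your $A+B$ decomposition with the conditioning on $I$ done in the other order. Your remark that Assumption~\ref{ass:heterogeneity_noniid} only controls the heterogeneity average at $\theta=\theta^t$, so the final clause holds along the iterates, is a fair refinement the paper leaves implicit.
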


\begin{proof}
Unbiasedness follows from $\mathbb{E}[\widehat{\nabla} F_I(\theta)]=K^{-1}\sum_{i=1}^{K}\mathbb{E}[\widehat{\nabla} F_i(\theta)]=\nabla F(\theta)$. Also,
\[
\mathbb{E}\|\widehat{\nabla} F_I-\nabla F\|^2
=
\frac{1}{K}\sum_{i=1}^{K}
\mathbb{E}\|\widehat{\nabla} F_i-\nabla F\|^2.
\]
Adding and subtracting $\nabla F_i$ gives $\widehat{\nabla} F_i-\nabla F=(\widehat{\nabla} F_i-\nabla F_i)+(\nabla F_i-\nabla F)$. The cross term vanishes because $\mathbb{E}[\widehat{\nabla} F_i-\nabla F_i]=0$, so $\mathbb{E}\|\widehat{\nabla} F_i-\nabla F\|^2\le\sigma_{\mathrm{priv}}^2+\|\nabla F_i-\nabla F\|^2$. Averaging over $i$ proves the claim.
\end{proof}

\subsection{Proof of Theorem~\ref{thm:stationarity_noniid}}
\label{app:proof_stationarity_noniid}

Let $t=0,\dots,T-1$ and let $q(t)$ be the communication round corresponding to step $t$. The SGD update is $\theta^{t+1}=\theta^t-\eta g^t$.

\paragraph{Bias--noise decomposition.}
Define the surrogate gradient bias
\[
b^t
:=
\nabla \Phi^{(q(t))}(\theta^t)-\nabla \bar\Phi(\theta^t)
=
\frac{1}{2}
\big(\nabla_\theta G_t(\theta^t)-\nabla_\theta G(\theta^t)\big).
\]
On the event of Lemma~\ref{lem:sur_ref_gap_noniid}, $\|b^t\|\le \Delta_T(\rho)/2$ for all $t$. Define $\xi^t:=g^t-\nabla \Phi^{(q(t))}(\theta^t)$. By Assumption~\ref{ass:smooth_sgd_noniid}(2), $\mathbb{E}[\xi^t\mid \theta^t,q(t)]=0$ and $\mathbb{E}[\|\xi^t\|^2\mid \theta^t,q(t)]\le\sigma^2+\zeta^2/4$. Hence
\[
g^t
=
\nabla \bar\Phi(\theta^t)+b^t+\xi^t.
\]

\paragraph{One-step descent on $\bar\Phi$.}
By $\Lambda$-smoothness of $\bar\Phi$ along iterates,
\[
\bar\Phi(\theta^{t+1})
\le
\bar\Phi(\theta^t)
-\eta\langle \nabla \bar\Phi(\theta^t),g^t\rangle
+\frac{\Lambda\eta^2}{2}\|g^t\|^2.
\]
Taking conditional expectation given $\theta^t,q(t)$, using Young's inequality, and bounding the squared norm gives
\[
\mathbb{E}[\bar\Phi(\theta^{t+1})\mid \theta^t,q(t)]
\le
\bar\Phi(\theta^t)
-\eta\left(\frac12-\Lambda\eta\right)\|\nabla \bar\Phi(\theta^t)\|^2
+
\left(\frac{\eta}{2}+\Lambda\eta^2\right)\|b^t\|^2
+
\frac{\Lambda\eta^2}{2}\left(\sigma^2+\frac14\zeta^2\right).
\]
Choose $\eta\le 1/(4\Lambda)$ so that $(1/2-\Lambda\eta)\ge 1/4$ and $(\eta/2+\Lambda\eta^2)\le 3\eta/4$. Since $\|b^t\|\le \Delta_T(\rho)/2$,
\[
\mathbb{E}[\bar\Phi(\theta^{t+1})]
\le
\mathbb{E}[\bar\Phi(\theta^t)]
-\frac{\eta}{4}\mathbb{E}[\|\nabla \bar\Phi(\theta^t)\|^2]
+
\frac{3\eta}{16}\Delta_T(\rho)^2
+
\frac{\Lambda\eta^2}{2}\left(\sigma^2+\frac14\zeta^2\right).
\]

\paragraph{Telescope.}
Summing over $t=0,\dots,T-1$ and using $\mathbb{E}[\bar\Phi(\theta^T)]\ge\bar\Phi^\star:=\inf_\theta\bar\Phi(\theta)$ gives
\[
\frac{1}{T}\sum_{t=0}^{T-1}\mathbb{E}[\|\nabla \bar\Phi(\theta^t)\|^2]
\le
\frac{4(\bar\Phi(\theta^0)-\bar\Phi^\star)}{\eta T}
+
\frac{3}{4}\Delta_T(\rho)^2
+
2\Lambda\eta\left(\sigma^2+\frac14\zeta^2\right),
\]
which proves Theorem~\ref{thm:stationarity_noniid}.
\qed

\paragraph{Proof overview and standard assumptions.}
Our convergence analysis follows a standard nonconvex SGD template. Lemma~\ref{lem:sur_ref_gap_noniid} bounds the gap between the empirical public gradient and the reference public gradient uniformly over $T$ steps by decomposing it into prompt covariate shift, pseudo-label conditional mismatch, and finite-sample concentration. Theorem~\ref{thm:stationarity_noniid} then plugs this uniform gap into a biased-SGD descent argument using $\Lambda$-smoothness, a bias--noise decomposition, Young's inequality, and telescoping. The assumptions are standard in nonconvex SGD and federated convergence analyses~\cite{ghadimi2013stochastic,karimireddy2020scaffold,li2020fedavg}; bounded-gradient assumptions are common and can be enforced via gradient clipping~\cite{li2020fedavg,abadi2016deep,chen2020understanding}; gradient-dissimilarity assumptions appear in FedProx/SCAFFOLD-style analyses~\cite{li2018fedprox,karimireddy2020scaffold}; KL/TV controls are standard in covariate-shift and domain-adaptation theory~\cite{sugiyama2007covariate,aminian2022information}; and modeling pseudo-labels as noisy labels or conditional mismatch aligns with noisy-label and self-training analyses~\cite{natarajan2013learning,wei2020theoretical}.

\section{Detailed Privacy Analysis of \fedcofit}
\label{app:privacy}

This appendix provides a detailed privacy analysis of DP-\fedcofit. We show that if each client's local LoRA fine-tuning procedure is differentially private, then the responses released on the public prompt set are also differentially private, and the full multi-round mechanism remains differentially private under adaptive composition.

\paragraph{Setup.}
Let $K$ be the number of clients, with client index set $[K]=\{1,\dots,K\}$. Training proceeds for $Q$ communication rounds, with round index set $[Q]=\{1,\dots,Q\}$. For each client $i\in[K]$, let $D_i=\{z_{i,1},\dots,z_{i,n_i}\}$ denote the private local dataset of client $i$, where $n_i=|D_i|$. Let $D_{\mathrm{pub}}=\{x_1,\dots,x_M\}$ denote the public prompt set shared by all clients, where $M=|D_{\mathrm{pub}}|$.

Each client maintains a local language model. For client $i$, $\phi_i$ denotes the trainable LoRA parameters, while the backbone parameters are frozen. At communication round $q\in[Q]$, the client updates its LoRA parameters using its private dataset $D_i$ and the information available from previous rounds.

\paragraph{Round history.}
For each round $q\in[Q]$, let $H^{(q-1)}$ denote the history available before round $q$. This history may include all previous client releases, server-side computations, and pseudo-labels or broadcasts sent back to clients up to round $q-1$. The exact contents of $H^{(q-1)}$ are not important for the privacy argument; the only requirement is that round $q$ may depend adaptively on earlier rounds through $H^{(q-1)}$.

\paragraph{Client-side private fine-tuning mechanism.}
At round $q$, client $i$ applies a randomized local training mechanism $\phi_i^{(q)}\leftarrow \mathcal{A}_{i,q}(D_i,H^{(q-1)})$. We assume that, for every fixed history $H^{(q-1)}$, the mechanism $\mathcal{A}_{i,q}(\cdot,H^{(q-1)})$ is $(\varepsilon_{i,q},\delta_{i,q})$-differentially private with respect to $D_i$.

\paragraph{Released responses on the public prompt set.}
After the local update, client $i$ evaluates its local model on each public prompt $x_j\in D_{\mathrm{pub}}$. We write $\hat{y}_{i,j}^{(q)}\leftarrow \mathcal{M}_i(x_j;\phi_i^{(q)})$ for $j\in[M]$. The round-$q$ message released by client $i$ is $\mathcal{R}_i^{(q)}=\{\hat{y}_{i,j}^{(q)}\}_{j=1}^{M}$.

\paragraph{Server-side processing.}
After receiving $\{\mathcal{R}_i^{(q)}\}_{i=1}^{K}$, the server performs the standard \fedcofit consensus step, including embedding the generated responses, clustering them in semantic space, selecting representative pseudo-labels, and broadcasting the pseudo-labeled public set back to clients. These operations are functions only of the released client messages and public information.

\begin{definition}[Neighboring datasets]
Fix a client $i\in[K]$. Two datasets $D_i$ and $D_i'$ are neighboring if they differ in exactly one training example.
\end{definition}

\begin{definition}[Differential privacy]
A randomized mechanism $\mathcal{A}$ is $(\varepsilon,\delta)$-differentially private with respect to client dataset $D_i$ if, for every pair of neighboring datasets $D_i,D_i'$ and every measurable set of outputs $S$,
\[
\Pr[\mathcal{A}(D_i)\in S]
\le
e^\varepsilon \Pr[\mathcal{A}(D_i')\in S]+\delta.
\]
\end{definition}

\paragraph{Round-level privacy.}
We first show that the public-prompt responses released by a client in a single round inherit the privacy guarantee of the local training mechanism.

\begin{lemma}[Privacy of released round-$q$ responses]
\label{lem:round_prediction_privacy}
Fix a client $i\in[K]$ and a communication round $q\in[Q]$. Suppose that, for the fixed history $H^{(q-1)}$, the local mechanism $\mathcal{A}_{i,q}(\cdot,H^{(q-1)})$ is $(\varepsilon_{i,q},\delta_{i,q})$-differentially private with respect to $D_i$. Then the released message $\mathcal{R}_i^{(q)}=\{\hat{y}_{i,j}^{(q)}\}_{j=1}^{M}$ is also $(\varepsilon_{i,q},\delta_{i,q})$-differentially private with respect to $D_i$.
\end{lemma}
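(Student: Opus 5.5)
The plan is to obtain the lemma directly from the post-processing property of differential privacy. Fix client $i$, round $q$, and the history $H^{(q-1)}$. We write the released message as a composition $\mathcal{R}_i^{(q)} = f\big(\mathcal{A}_{i,q}(D_i,H^{(q-1)})\big)$. Here $f$ maps a LoRA state $\phi$ to the tuple $\{\mathcal{M}_i(x_j;\phi)\}_{j=1}^{M}$. The key observation is that $f$ depends only on $\phi$, the frozen backbone, and the public prompts $D_{\mathrm{pub}}$, and not on $D_i$. Since $D_{\mathrm{pub}}$ is public and fixed (as is the history), $f$ is a data-independent map.

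Generation may itself be randomized, for example through sampling-based decoding. To cover that case, we model $f$ as a randomized map, i.e.\ a Markov kernel $\kappa(\cdot\mid\phi)$ on response tuples. Its internal randomness is independent of $D_i$ given $\phi$.

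The core step is the standard post-processing inequality. Take neighboring $D_i, D_i'$ and a measurable set $S$ of response tuples. Define $h(\phi)=\kappa(S\mid\phi)\in[0,1]$. Then
\[
\Pr[\mathcal{R}_i^{(q)}\in S]=\mathbb{E}_{\phi\sim\mathcal{A}_{i,q}(D_i,H^{(q-1)})}[h(\phi)].
\]
Write $\mathbb{E}[h(\phi)]=\int_0^1\Pr[h(\phi)>s]\,ds$. Each event $\{h(\phi)>s\}$ is a measurable set of LoRA states, so the $(\varepsilon_{i,q},\delta_{i,q})$-DP guarantee applies to it for every $s$. Integrating over $s$ gives
\[
\Pr[\mathcal{R}_i^{(q)}\in S]\le e^{\varepsilon_{i,q}}\Pr[\mathcal{R}_i'^{(q)}\in S]+\delta_{i,q},
\]
where $\mathcal{R}_i'^{(q)}$ denotes the release computed from $D_i'$. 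In the deterministic-decoding case this reduces to applying DP to the preimage $f^{-1}(S)$, and we note that simpler case as well.

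The main subtlety, rather than a real obstacle, is making sure nothing in $f$ secretly depends on $D_i$. This can happen in two ways.

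\begin{itemize}
\item Decoding randomness could be correlated with the private data. We rule this out by assumption: the decoding randomness is fresh and independent.
\item The history $H^{(q-1)}$ could depend on $D_i$ through earlier rounds. We handle this by the conditioning already built into the statement: $H^{(q-1)}$ is held fixed, and the guarantee is required for every fixed history. Its dependence on $D_i$ is then accounted for later by adaptive composition, as in Theorem~\ref{thm:dp-fedcofit}, and not in this lemma.
\end{itemize}

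Measurability of $\kappa$ is a routine technical requirement and is assumed.
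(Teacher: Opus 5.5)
Your proposal is correct and follows the same route as the paper: the paper also writes $\mathcal{R}_i^{(q)}$ as a function of the DP output $\phi_i^{(q)}$ and the fixed public prompts $D_{\mathrm{pub}}$ only, and then invokes the post-processing property. Your Markov-kernel formulation of randomized decoding and the layer-cake integration over $s$ simply prove the post-processing inequality that the paper cites, and they also explicitly cover sampling-based generation.
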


\begin{proof}
For fixed history $H^{(q-1)}$, the output $\phi_i^{(q)}$ is produced by $\mathcal{A}_{i,q}(D_i,H^{(q-1)})$, which is $(\varepsilon_{i,q},\delta_{i,q})$-differentially private by assumption. The released message $\mathcal{R}_i^{(q)}$ is obtained by applying $\mathcal{M}_i(\cdot;\phi_i^{(q)})$ to the public prompt set $D_{\mathrm{pub}}$. Since $D_{\mathrm{pub}}$ is public and fixed independently of $D_i$, the map $\phi_i^{(q)}\mapsto \mathcal{R}_i^{(q)}$ is a function of the private mechanism output and public information only. Therefore, by post-processing, $\mathcal{R}_i^{(q)}$ is also $(\varepsilon_{i,q},\delta_{i,q})$-differentially private with respect to $D_i$.
\end{proof}

\paragraph{Multi-round privacy.}
Let $\mathcal{G}_i(D_i)=(\mathcal{R}_i^{(1)},\mathcal{R}_i^{(2)},\dots,\mathcal{R}_i^{(Q)})$ denote the overall multi-round mechanism induced by client $i$. Since each $\mathcal{R}_i^{(q)}$ may depend on the previous history $H^{(q-1)}$, $\mathcal{G}_i$ is an adaptive composition of the per-round release mechanisms.

\begin{theorem}[Record-level privacy of DP-\fedcofit]
\label{thm:appendix-dp-fedcofit}
Suppose that, for every communication round $q\in[Q]$ and every fixed history $H^{(q-1)}$, the client-side mechanism $\mathcal{A}_{i,q}(\cdot,H^{(q-1)})$ is $(\varepsilon_{i,q},\delta_{i,q})$-differentially private with respect to $D_i$. Then:
\begin{enumerate}
    \item the released public-prompt responses $\mathcal{R}_i^{(q)}$ are $(\varepsilon_{i,q},\delta_{i,q})$-differentially private with respect to $D_i$ for every round $q\in[Q]$;
    \item the overall multi-round mechanism $\mathcal{G}_i(D_i)=(\mathcal{R}_i^{(1)},\dots,\mathcal{R}_i^{(Q)})$ is
    \[
    \left(\sum_{q=1}^{Q}\varepsilon_{i,q},\ \sum_{q=1}^{Q}\delta_{i,q}\right)
    \text{-differentially private}
    \]
    with respect to $D_i$.
\end{enumerate}
\end{theorem}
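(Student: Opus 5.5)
The argument has two parts, corresponding to the two claims of Theorem~\ref{thm:appendix-dp-fedcofit}. Part 1 follows directly from Lemma~\ref{lem:round_prediction_privacy}. For a fixed history $H^{(q-1)}$, the release $\mathcal{R}_i^{(q)}$ is obtained by applying a map to $\phi_i^{(q)}$. That map depends only on the frozen backbone, the public prompts $D_{\mathrm{pub}}$, and the decoding randomness, and none of these depend on $D_i$. The decoding step may itself be randomized, for example through sampling; we handle this by treating it as a randomized post-processing kernel whose internal coins are independent of $D_i$. For any measurable set $S$, write $\Pr[\mathcal{R}_i^{(q)}\in S]=\mathbb{E}_{\phi\sim\mathcal{A}_{i,q}(D_i,H^{(q-1)})}[K(\phi,S)]$ with $K(\phi,S)\in[0,1]$. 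The $(\varepsilon,\delta)$-DP inequality holds for sets, and it extends to $[0,1]$-valued test functions by the layer-cake formula. This gives the round-$q$ claim with the same parameters $(\varepsilon_{i,q},\delta_{i,q})$.

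For part 2, we view $\mathcal{G}_i$ as an adaptive composition. The history $H^{(q-1)}$ consists of client $i$'s earlier releases together with server outputs. The server outputs are deterministic, or independently randomized, functions of all clients' releases and public data. The other clients' releases do not depend on $D_i$ except through $H$. We therefore condition on the randomness of the other clients, or treat them as part of the adversary's adaptive choice. Then $H^{(q-1)}$ is a post-processing of $(\mathcal{R}_i^{(1)},\dots,\mathcal{R}_i^{(q-1)})$ together with independent randomness. As a result, the round-$q$ mechanism $D_i\mapsto\mathcal{R}_i^{(q)}$ is an $(\varepsilon_{i,q},\delta_{i,q})$-DP mechanism selected adaptively from the previous outputs. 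Applying the basic adaptive composition theorem (Dwork--Roth, Thm.~3.16, adaptive version) then yields $(\sum_q\varepsilon_{i,q},\sum_q\delta_{i,q})$-DP for the joint output $(\mathcal{R}_i^{(1)},\dots,\mathcal{R}_i^{(Q)})$.

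To keep the argument self-contained, we would prove the composition step by induction on $Q$. For $(\varepsilon,\delta)$-DP the cleanest route is the standard characterization: $\mathcal{A}$ is $(\varepsilon,\delta)$-DP if and only if, for each neighboring pair, the output distributions can be written as $P=(1-\delta)P'+\delta E$ with $P'$ $\varepsilon$-close pointwise to $Q'$ (a coupling/decomposition lemma). Combining this decomposition with the chain rule for the conditional law of $\mathcal{R}_i^{(q)}$ given the past gives the additive bound.

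We expect the main obstacle to be the adaptivity bookkeeping. The DP assumption is stated per fixed history, but the history is random and partly generated by other clients and the server, whose behaviour is coupled to client $i$'s past outputs. The key point to verify is that $D_i$ influences $H^{(q-1)}$ only through client $i$'s own earlier releases. Given that, conditioning on the other clients' independent randomness reduces the problem to the standard adaptive-composition setting. Finally, any server-side consensus output and broadcast is a post-processing of $\mathcal{G}_i(D_i)$, so it is covered by the same guarantee.
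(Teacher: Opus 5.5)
Your proposal is correct and takes essentially the same route as the paper: post-processing of the DP local update (Lemma~\ref{lem:round_prediction_privacy}) for part 1, adaptive composition for part 2, and all server-side steps treated as post-processing. Your extra care fills in bookkeeping the paper leaves implicit without changing the argument: you treat randomized decoding as a post-processing kernel via the layer-cake bound, and you condition on the other clients' and the server's independent randomness so that $H^{(q-1)}$ becomes a post-processing of client $i$'s own earlier releases.
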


\begin{proof}
Part (1) follows directly from Lemma~\ref{lem:round_prediction_privacy}. For Part (2), the sequence $\mathcal{R}_i^{(1)},\dots,\mathcal{R}_i^{(Q)}$ is adaptive because the mechanism used at round $q$ may depend on $H^{(q-1)}$. Differential privacy is closed under adaptive composition, so composing the per-round mechanisms over $Q$ rounds gives the stated guarantee. Finally, all server-side operations in \fedcofit, including response embedding, clustering, representative selection, and pseudo-label broadcast, are functions of released messages and public information only; hence, they incur no additional privacy loss by post-processing.
\end{proof}

\paragraph{Interpretation.}
The theorem establishes a record-level privacy guarantee for each client's private dataset. Sharing responses on the public prompt set does not invalidate privacy in DP-\fedcofit because those responses are outputs of a differentially private client-side LoRA fine-tuning mechanism.

\section{DETAILS ON EXPERIMENTS}
\label{app:exp:details}

\subsection{Environments, Datasets and Metrics}

\paragraph{Compute Resources}: All experiments are run on a server with an Intel(R) Platinum $8480$ and $5$ x NVIDIA H100 80GB.
\paragraph{Public Prompt Set}: Across all experiments, clients collaborate via a shared public prompt set $\mathcal{D}_p$, which is sampled from the corresponding dataset and excluded from the private fine-tuning data used by clients in that setting. 
\paragraph{Datasets}:
\paragraph{Dolly-15K.} An instruction-tuning dataset of $15K$ examples spanning diverse tasks (e.g., QA, summarization, extraction). In our instruction-compliance regime, we partition Dolly-15K across clients as private data and sample in-distribution public prompts from the same instruction pool when studying prompt-budget effects.
\paragraph{Alpaca.} A widely used instruction-following corpus (tens of thousands of instruction–response pairs). We use Alpaca as a private fine-tuning dataset in the instruction-compliance regime and also as an out-of-distribution source of public prompts for robustness tests.
\paragraph{Wizard.} A complex instruction-tuning dataset (Evol-Instruct style) emphasizing harder instructions and reasoning-heavy responses. We use Wizard for the knowledge/generalization regime as clients’ private fine-tuning data.
\paragraph{OpenOrca.} A GPT-augmented instruction dataset with reasoning traces (derived from FLAN-style tasks). We use OpenOrca as a private fine-tuning corpus in the knowledge/generalization regime and as an OOD public-prompt source in robustness experiments.
\paragraph{ShareGPT.} A chat-style conversation dataset collected from ShareGPT, containing multi-turn user–assistant interactions. We use ShareGPT as a private fine-tuning corpus for chat-quality experiments and also as a format-shifted public-prompt source; compared to instruction prompts, these chat-oriented prompts can induce more diverse generations and reduce consensus strength.
\paragraph{UltraChat.} A large-scale, high-quality conversational/instructional dialogue dataset. We use UltraChat as a private fine-tuning corpus for chat-quality experiments.

\paragraph{Benchmarks and Metrics:}
\paragraph{IFEval (Instruction-Following Evaluation).} We measure instruction-compliance by evaluating models fine-tuned on instruction data (Dolly-15K/Alpaca) using the IFEval benchmark, which tests whether a model follows explicit instruction constraints.
\paragraph{MMLU (Massive Multitask Language Understanding).} We evaluate knowledge and generalization after fine-tuning on Wizard/OpenOrca using MMLU, a broad multi-subject benchmark designed to probe factual knowledge and reasoning across many domains.
\paragraph{MT-Bench.} We evaluate chat quality after fine-tuning on ShareGPT/UltraChat using MT-Bench, a multi-turn open-ended benchmark for chat assistants (commonly judged by a strong LLM-as-a-judge protocol).

\section{Experiment setup}
\label{app:exp:setup}
We evaluate \fedcofit across three adaptation regimes: instruction compliance, knowledge, and generalization and chat quality. For instruction compliance, clients fine-tune on Dolly-15K~\cite{zhang2024towards} and Alpaca~\cite{taori2023stanford}, and we evaluate instruction-following behaviour on IFEval~\cite{zhou2023instruction}.
For knowledge and generalization, clients fine-tune on Wizard~\cite{luo2023wizardmath} and OpenOrca~\cite{OpenOrca}, and we evaluate on MMLU~\cite{hendrycks2020measuring}.
For chat quality, clients fine-tune on ShareGPT~\cite{chiang2023vicuna} and UltraChat~\cite{ding2023enhancing}, and we evaluate on MT-Bench~\cite{zheng2023judging}. We consider three foundation models: TinyLlama, LLaMA-7B, and LLaMA2-13B. For TinyLlama and LLaMA-7B we use $K=10$ clients, and for LLaMA2-13B, we use $K=5$ clients. All methods are evaluated under the same federation budget ( same number of clients and communication rounds ). We run IFEval experiments for $3$ communication rounds, MMLU experiments for $2$ rounds, and MT-Bench experiment for $1$ communication round. In \fedcofit experiments, clients share a public prompt set $D_{\mathrm{pub}}$ constructed by sampling prompts from the corresponding fine-tuning dataset and excluding them from private training splits. We use a fixed public prompt budget of $M=500$ prompts. We apply LoRA with rank $r=32$. For all models, we adapt the attention projection modules:\texttt{q\_proj}, \texttt{k\_proj}, \texttt{v\_proj}, \texttt{o\_proj}. We use a micro-batch size of $16$ and an effective batch size of $128$. The learning rate is set to $3\times 10^{-4}$ across experiments.

\section{Extra Experiment results}

\subsection{Effect of Public Prompt Budget:}
\label{main:eff:promp:size}
We first vary the number of public prompts by subsampling $M \in \{1000, 800, 500, 100, 50\}$ prompts from the Dolly-15K instruction pool, yielding public sets that are in-distribution with respect to the private fine-tuning data. Fig. \ref{fig:u_size} shows that \fedcofit is stable for moderate public prompt size as performance is nearly unchanged at $M=1000,800,500$ (IFEval$\approx 0.514-0.510$). When the size is reduced to $M \le 100$, performance drops to $\approx 0.481$. Importantly, even at large budgets, \fedcofit matches or exceeds strong baselines (e.g., FLoRA at $0.5026$) while at very small budgets it remains substantially above FedIT ($0.4382$).

\begin{figure}[H]
\begin{minipage}[H]{0.5\textwidth}
    \centering
    \includegraphics[width=\linewidth]{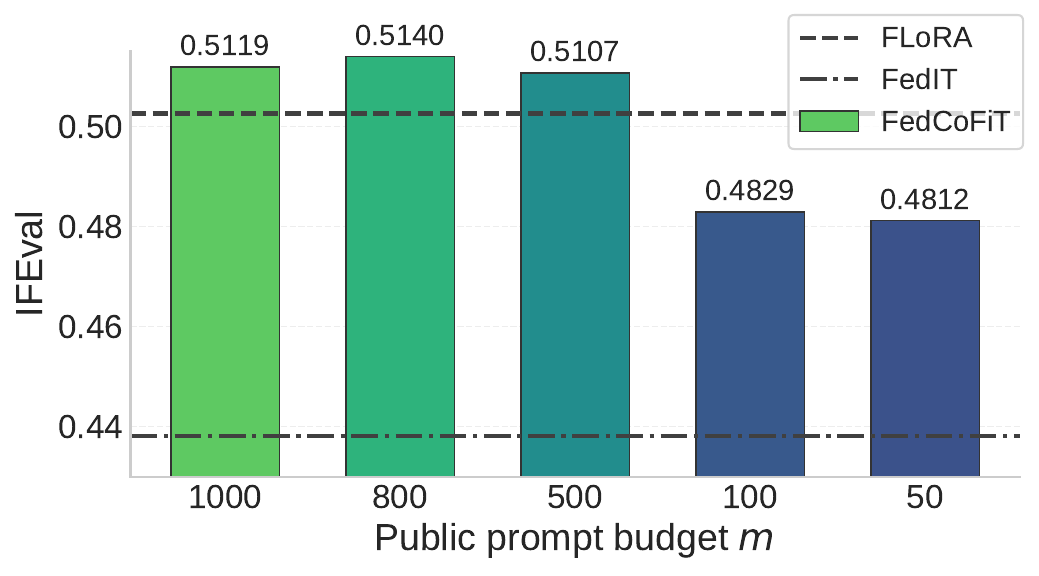}
    \caption{IFEval vs public prompt budget $M$.}
    \label{fig:u_size}
\end{minipage}\hfill
\begin{minipage}[H]{0.5\textwidth}
    \centering
    \includegraphics[width=\linewidth]{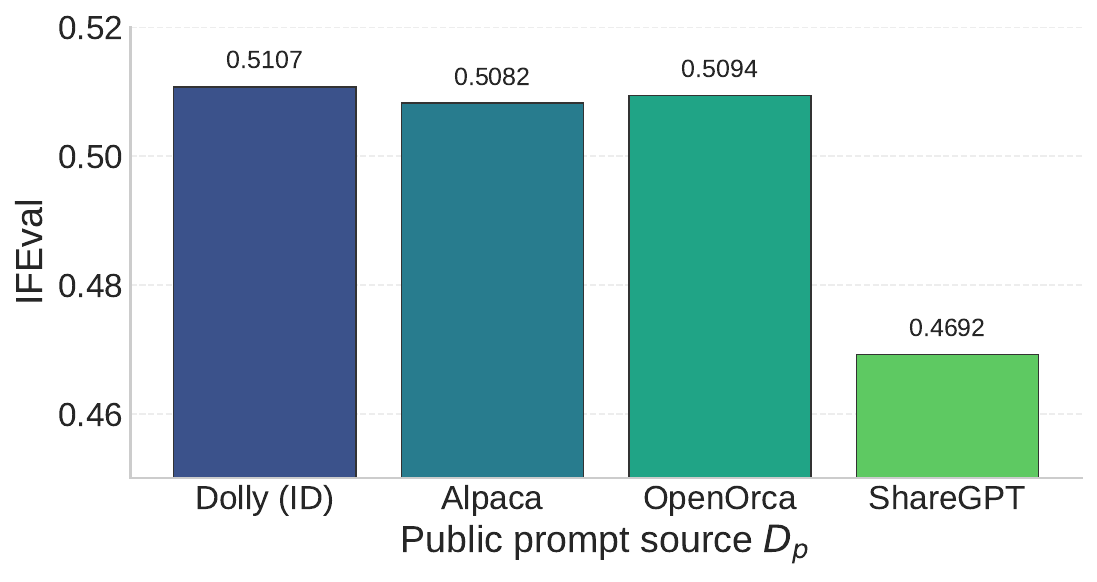}
    \caption{\fedcofit robustness to public-prompt distribution shift}
    \label{fig:dis_shift}
\end{minipage}
\end{figure}

\subsection{Effect of public prompt distribution shift:}
\label{app:prompt_distribution_shift}
Next, we test whether \fedcofit requires the public prompts to match the client data distribution. We fix the prompt budget to a constant $500$ and keep the private fine-tuning data unchanged (Dolly-15k), but replace the public prompt source. Specifically, we compare (1) $D_{\mathrm{pub}}$ sampled from Dolly-15k prompts (in-distribution) against out-of-distribution public prompts sampled from (2) Alpaca, (3) OpenOrca, and (4) ShareGPT, using prompts only in all cases. We report IFEval score for each setting. As shown in Fig. \ref{fig:dis_shift}, using instruction-style public prompts such as Alpaca and OpenOrca yields only minor changes compared to Dolly-15k. This suggests that \fedcofit is robust to moderate shifts in the public prompt distribution as long as the prompts remain instruction-following and aligned in format. In contrast, using ShareGPT prompts leads to a drop to $0.4692$. We attribute this to a stronger distribution shift. ShareGPT prompts are chat-oriented and often reflect multi-turn conversational context, informal style, and different intent distributions than Dolly-style single-run instructions. As a result, client generations on these prompts become more diverse in surface form and content. Overall, these results indicate that \fedcofit does not require a perfectly matched public prompt set, but benefits from public prompts that are at least format-consistent with the downstream adaptation task.

\begin{figure*}
    \centering
    \includegraphics[width=0.9\linewidth]{figs/fedcofit_efficiency_llama_relative.pdf}
    \caption{Efficiency comparison on Dolly-15k using Llama-7B}
    \label{fig:eff:llama:main}
\end{figure*}

\subsection{\fedcofit Efficiency Evaluation}
\label{app:eff:eval}
Beyond accuracy and communication volume, we evaluate the efficiency of \fedcofit in terms of end-to-end wall clock runtime, energy consumption, and estimated $\mathrm{CO}_2$ emissions. We measure runtime, aggregate energy in kwh and $\mathrm{CO}_2$ in kg using the Lamarr Energy tracker ~\cite{ai_energy_validation} under a realistic bandwidth. We report end-to-end measurements that include local fine-tuning, public-prompt generation, server-side consensus construction, and communication overhead. To ensure a fair comparison, all methods are run with the same number of clients, the same number of communication rounds, and matched local training budgets. Hardware and hyperparameter details are provided in Appendix~\ref{app:exp:details}. Figure \ref{eff-llama} shows that across the main experimental settings, \fedcofit achieves comparable task performance to strong aggregation baselines while reducing end-to-end wall-clock time and energy consumption, and consequently lowering estimated $\mathrm{CO}_2$ emissions. The gains are most pronounced in regimes where communication and synchronization dominate the runtime (e.g., larger backbones and higher-rank LoRA baselines), consistent with our communication-scaling analysis. 

\begin{figure*}
    \centering
    \includegraphics[width=0.9\linewidth]{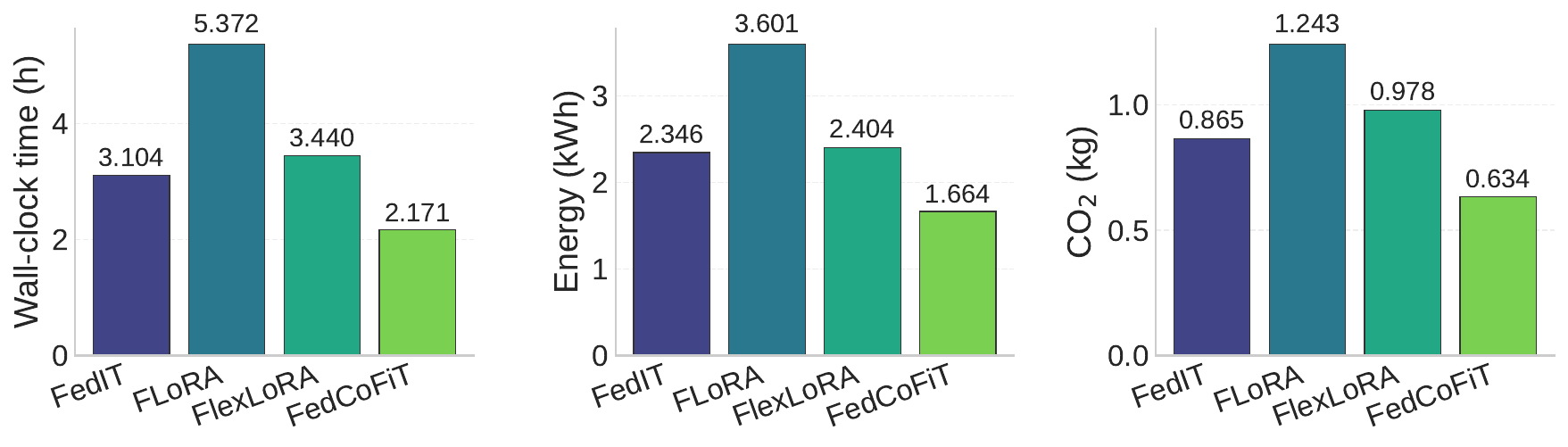}
    \caption{Efficiency comparison on Dolly-15 using Llama-7B}
    \label{eff-llama}
\end{figure*}

\subsection{Choice of Semantic Encoder}
\label{app:encoder_ablation}
\fedcofit builds server-side consensus in a semantic embedding space, where each client-generated response is mapped by a sentence encoder into a fixed-dimensional vector. The encoder choice affects both (1) the quality of semantic similarity and thus the quality of pseudo-labels and (2) algorithm efficiency, since embeddings are computed for every public prompt and client output each round. We therefore ablate a small set of widely used SBERT-style encoders that span different speed/size regimes. We fix the federated setup (TinyLlama clients, Dolly-15K private fine-tuning, fixed public prompt budget, identical decoding and clustering hyperparameters) and vary only the semantic encoder. We then evaluate the resulting fine-tuning model on IFEval benchmark. Table \ref{tab:encoder_ablation} summarizes encoder size, embedding dimension, encoding throughput (sentence/sec), and the downstream IFEval score obtained when fine-tuning on Dolly-15K and evaluating on IFEval. Across the tested encoders, all-MiniLM-L6-v2 provides the best overall trade-off. While larger encoders such as all-mpnet-base-v2 can be competitive, they are substantially slower and heavier. Multilingual encoders (e.g., distiluse-base-multilingual-cased-v1) also introduce additional overhead without improving Dolly/IFEval performance in our instruction-following setting. We therefore select all-MiniLM-L6-v2 as the default semantic encoder in all experiments due to its strong IFEval performance coupled with low model size and high encoding speed.
\begin{table}[H]
\centering
\small
\setlength{\tabcolsep}{5pt}
\renewcommand{\arraystretch}{1.05}
\caption{Semantic encoder ablation for FedCoFiT. We report encoder size, embedding dimension, encoding throughput (sentences/sec), and downstream IFEval when training on Dolly-15K and evaluating on IFEval.}
\label{tab:encoder_ablation}
\begin{tabular}{lcccc}
\toprule
\textbf{Encoder} & \textbf{Size (MB)} & \textbf{Dim} & \textbf{Speed (sent/s)} & \textbf{IFEval} $\uparrow$ \\
\midrule
all-mpnet-base-v2                    & 420 & 768 & 2800  & 52.47 \\
all-distilroberta-v1                 & 290 & 768 & 4000  & 51.29 \\
all-MiniLM-L12-v2                    & 120 & 384 & 7500  & 51.12 \\
\textbf{all-MiniLM-L6-v2}            & \textbf{80}  & \textbf{384} & \textbf{14200} & \textbf{51.19} \\
distiluse-base-multilingual-cased-v1 & 480 & 512 & 4000  & 52.17 \\
\bottomrule
\end{tabular}
\end{table}

\subsection{Ablation on Consensus Representative Selection}
\label{app:representative_selection_ablation}

\fedcofit selects one client-generated response as the pseudo-label for each public prompt. 
To isolate the effect of this selection rule, we run \fedcofit on TinyLlama with Dolly-15K private fine-tuning data and evaluate the resulting models on IFEval. 
We keep the semantic encoder, DBSCAN clustering hyperparameters, decoding configuration, public prompt budget $M=500$, and local training setup fixed, and vary only how the final pseudo-label $y_j^\star$ is chosen. 
We compare three alternatives: (i) our default \emph{majority-cluster centroid} rule, which first selects the largest non-outlier cluster and then chooses the response closest to its centroid; (ii) \emph{majority-cluster random}, which selects a uniformly random response from the largest non-outlier cluster; and (iii) \emph{global medoid}, which ignores clustering and chooses the response with the smallest average distance to all other client responses. 
This ablation tests whether both parts of the consensus rule are useful: majority-cluster selection filters minority modes and outliers, while centroid-based representative selection avoids arbitrary pseudo-label choice within the consensus cluster. Table~\ref{tab:representative_selection_ablation} shows that our default representative-selection rule performs best. Selecting the response closest to the centroid of the majority cluster improves over choosing a random response from the same cluster, indicating that centrality within the consensus cluster matters for pseudo-label quality.  The larger drop for the global-medoid baseline shows that clustering and outlier filtering are also important: choosing the most central response over all client outputs can be affected by minority modes or noisy generations.  These results support the two-stage design of \fedcofit, where the server first identifies the dominant semantic mode and then selects the most representative response within that mode.

\begin{table}[H]
\centering
\small
\setlength{\tabcolsep}{5pt}
\renewcommand{\arraystretch}{1.05}
\caption{Ablation of pseudo-label representative selection for \fedcofit on TinyLlama with Dolly-15K private data and IFEval evaluation. We keep DBSCAN, the semantic encoder, decoding, public prompt budget, and training setup fixed, and vary only how the final pseudo-label $y_j^\star$ is selected.}
\label{tab:representative_selection_ablation}
\begin{tabular}{lccc}
\toprule
\textbf{Selection rule} & \textbf{Uses majority cluster?} & \textbf{Outlier filtering?} & \textbf{IFEval} $\uparrow$ \\
\midrule
Majority-cluster centroid representative 
& Yes & Yes & \textbf{51.10} \\
Random representative from majority cluster 
& Yes & Yes & 50.06 \\
Global medoid over all responses 
& No & No & 48.17 \\
\bottomrule
\end{tabular}
\end{table}

\subsection{Communication Example: LLaMA-2-13B scale.}
\label{app:llama2:example}
Consider $K=10$ clients, $M=1024$ prompts per round, average response length $\bar{\ell}=128$, and $c_t\approx 2$ bytes/token. Then $C_{\mathrm{SC}}\approx 2KM\bar{\ell}c_t=2\cdot10\cdot1024\cdot128\cdot2\approx 5.2~\mathrm{MiB}$, where the leading factor $2$ accounts for upload and download.
For a LLaMA-2-13B-scale model with $L_{\mathrm{layers}}=40$, hidden size $d_{\mathrm{model}}=5120$, FP16 communication $c_{\mathrm{param}}=2$, and rank $r=32$, LoRA on $W_q$ and $W_v$ gives upload $C^{\uparrow}_{\mathrm{LoRA}(q,v)}=c_{\mathrm{param}}L_{\mathrm{layers}}2r(d_{\mathrm{model}}+d_{\mathrm{model}})=52{,}428{,}800$ bytes $\approx50.0~\mathrm{MiB}$, or about $100.0~\mathrm{MiB}$ upload plus download. If LoRA is applied to all attention projections and the MLP with $d_{\mathrm{ff}}=13824$, the upload is $C^{\uparrow}_{\mathrm{LoRA}(\mathrm{attn+MLP})}=c_{\mathrm{param}}L_{\mathrm{layers}}r\{4(d_{\mathrm{model}}+d_{\mathrm{model}})+3(d_{\mathrm{model}}+d_{\mathrm{ff}})\}\approx238.7~\mathrm{MiB}$, or about $477.4~\mathrm{MiB}$ total. Orthogonal communication-saving mechanisms, such as dynamic or event-triggered communication protocols, could in principle be combined with Semantic Consensus by skipping consensus rounds when local model behavior or its semantic embedding changes only marginally across rounds~\citep{kamp2019black,kamp2018efficient,kamp2016communication}.

\section{Impact Statement}
\label{app:impact}
\fedcofit has the potential to make federated fine-tuning of large language models substantially more practical by replacing costly parameter aggregation with semantic consensus over model outputs. By exchanging only generated responses to shared public prompts, the method reduces communication independently of model size, supports heterogeneous clients with different architectures, tokenizers, or LoRA ranks, and can operate even when only black-box model access is available. Empirically, the paper shows that FedCoFiT matches or exceeds strong federated LoRA baselines across instruction following, knowledge generalization, and chat-quality tasks while using orders of magnitude less communication, with additional reductions in runtime, energy use, and estimated emissions. Its compatibility with differential privacy further strengthens its relevance for sensitive domains such as healthcare, finance, and enterprise settings, where data cannot be centralized but collaborative adaptation is valuable.

\newpage

\end{document}